\documentclass{article} 
\usepackage{iclr2027_conference,times}
\iclrpreprintcopy 

\usepackage{amsmath,amsfonts,bm}

\def\eqref#1{equation~\ref{#1}}

\def\1{\bm{1}}

\DeclareMathAlphabet{\mathsfit}{\encodingdefault}{\sfdefault}{m}{sl}
\SetMathAlphabet{\mathsfit}{bold}{\encodingdefault}{\sfdefault}{bx}{n}

\usepackage{hyperref}
\usepackage{url}

\newcounter{commcounter}
\newcommand{\model}{Top-A}
\newcommand{\modelext}{Topological Attention}
\newcommand{\ie}{i.e., }
\newcommand{\eg}{e.g., }
 
\usepackage{todonotes}
\usepackage{cleveref}
\usepackage{subcaption}
\usepackage[T1]{fontenc}
\usepackage{amsmath,amssymb,amsfonts,amsthm,bm,mathtools}
\usepackage{booktabs,multirow,array,adjustbox,longtable}
\usepackage{graphicx}
\usepackage{float}
\usepackage{microtype}
\usepackage[dvipsnames,svgnames,x11names,table]{xcolor}
\usepackage{enumitem}
\usepackage{tikz}
\usetikzlibrary{arrows.meta,positioning,fit,calc,matrix}
\usepackage{ifthen}
\usepackage{thm-restate}
\usepackage{pifont}
\usepackage{etoc}
\usepackage{mdframed}
\usepackage{wrapfig}
\usepackage[separate-uncertainty=true]{siunitx}
\hypersetup{colorlinks=true,linkcolor=blue!55!black,citecolor=blue!55!black,urlcolor=blue!55!black}
\setlist{nosep,leftmargin=*}

\definecolor{Green}{RGB}{0,128,0}
\definecolor{Red}{RGB}{200,0,0}
\newcommand{\cmark}{\textcolor{Green}{\ding{51}}}
\newcommand{\xmark}{\textcolor{Red}{\ding{55}}}

\definecolor{Top1}{HTML}{0072B2}
\definecolor{Top2}{HTML}{E69F00}
\definecolor{Top3}{HTML}{CC79A7}
\definecolor{ufill}{HTML}{E1D5E7}
\definecolor{uborder}{HTML}{9673A6}
\definecolor{vfill}{HTML}{D5E8D4}
\definecolor{vborder}{HTML}{82B366}

\definecolor{proofbar}{RGB}{225,225,225}
\definecolor{theorembar}{RGB}{128,0,128}
\definecolor{propositionbar}{RGB}{0,100,0}
\definecolor{definitionbar}{RGB}{0,0,200}
\definecolor{lemmabar}{RGB}{255,140,0}
\definecolor{corollarybar}{RGB}{204,153,255}
\definecolor{remarkbar}{RGB}{0,139,139}
\definecolor{assumptionbar}{RGB}{178,34,34}

\usepackage[most]{tcolorbox}
\tcbuselibrary{skins,breakable}
\tcbset{
    mybarstyle/.style={
        enhanced,
        breakable,
        sharp corners,
        frame hidden,      
        interior hidden,   
        boxrule=0pt,
        colframe=#1,
        borderline west={2pt}{0pt}{#1},  
        left=6pt,
        right=0pt,
        top=0pt,
        bottom=0pt,
        before skip=10pt,
        after skip=10pt,
    }
}

\theoremstyle{plain}
\newtheorem{theorem}{Theorem}[section]
\newtheorem{proposition}[theorem]{Proposition}

\theoremstyle{definition}
\newtheorem{definition}[theorem]{Definition}
\theoremstyle{remark}

\newif\ifshowtheorembars
\ificlrfinal\showtheorembarstrue\fi
\ificlrpreprint\showtheorembarstrue\fi
\newcommand{\theorembarstyle}[2]{%
    \addtotheorempreheadhook[#1]{%
        \ifshowtheorembars\begin{tcolorbox}[mybarstyle=#2]\fi}%
    \addtotheorempostfoothook[#1]{%
        \ifshowtheorembars\end{tcolorbox}\fi}%
}
\theorembarstyle{theorem}{theorembar}
\theorembarstyle{proposition}{propositionbar}
\theorembarstyle{lemma}{lemmabar}
\theorembarstyle{corollary}{corollarybar}
\theorembarstyle{definition}{definitionbar}
\theorembarstyle{remark}{remarkbar}
\theorembarstyle{assumption}{assumptionbar}
\theorembarstyle{proof}{proofbar}

\newcommand{\todoresult}[1]{}

\newboolean{showcomments}
\setboolean{showcomments}{true}
\newcommand{\fr}[1]{\textcolor{red}{\textbf{[FR: #1]}}}

\title{ 
Let the Heads Talk:\\Beyond Diagonal Graph Attention
}
\author{%
\textbf{Riccardo Ali}\textsuperscript{\rm 1}\footnotemark[1]\quad
\textbf{Alessio Borgi}\textsuperscript{\rm 1,3}\thanks{%
Equal contribution. Contacts: Riccardo Ali: \href{mailto:rma55@cam.ac.uk}{\nolinkurl{rma55@cam.ac.uk}};
Alessio Borgi: \href{mailto:alessio.borgi@uniroma1.it}{\nolinkurl{alessio.borgi@uniroma1.it}}, \href{mailto:ab3352@cam.ac.uk}{\nolinkurl{ab3352@cam.ac.uk}};
Mario Severino: \href{mailto:mario.severino@phd.unipd.it}{\nolinkurl{mario.severino@phd.unipd.it}};
Alessio Gravina: \href{mailto:alessio.gravina@di.unipi.it}{\nolinkurl{alessio.gravina@di.unipi.it}};
Christopher Irwin: \href{mailto:cli23@cam.ac.uk}{\nolinkurl{cli23@cam.ac.uk}}.}\quad
\textbf{Mario Severino}\textsuperscript{\rm 1,2}\footnotemark[1]\quad
\textbf{Alessio Gravina}\textsuperscript{\rm 4}\\
\textbf{Davide Bacciu}\textsuperscript{\rm 4}\quad
\textbf{Pietro Li\`o}\textsuperscript{\rm 1}\quad
\textbf{Christopher Irwin}\textsuperscript{\rm 1}\footnotemark[1]\\[0.5em]
\small\textsuperscript{\rm 1}Department of Computer Science and Technology, University of Cambridge\\
\small\textsuperscript{\rm 2}Department of Information Engineering, University of Padua\\
\small\textsuperscript{\rm 3}Department of Computer, Control and Management Engineering, Sapienza University of Rome\\
\small\textsuperscript{\rm 4}Department of Computer Science, University of Pisa%
}

\makeatletter
\let\appendixpapertitle\@title
\makeatother

\begin{document}
\maketitle

\etocsettocdepth.toc{none}


\begin{abstract}
Sheaf Neural Networks generalize scalar-weighted message passing by replacing
scalar edge weights with linear transport maps between local feature spaces.
Yet the role of this matrix-valued transport is entangled with the broader
sheaf-diffusion construction. We isolate the transport primitive through
quiver representations and establish a direct connection with multi-head
attention. Treating attention heads as coordinates of a local transport space
reveals that standard multi-head attention implements diagonal edge maps:
along each directed interaction, a source head can contribute only to the
corresponding receiver head. Allowing off-diagonal entries instead enables
edge-conditioned communication across heads before neighborhood aggregation.
We show that this operation cannot, in general, be absorbed into a single
shared linear map applied after aggregation. Building on this characterization,
we introduce \textbf{\modelext{}} (\model{}), a multi-head attention that learns edge-dependent off-diagonal routes while
preserving the original same-head paths and exactly recovering vanilla
attention when the additional routing vanishes. We evaluate \model{} on
relational reasoning, heterogeneous graph learning, and algorithmic reasoning,
including out-of-distribution generalization, with heterophilic node
classification as a contrast setting. The results show that cross-head
transport is most useful when the task benefits from interaction-dependent
transformations, while heterophily alone provides no systematic advantage.
These findings identify edge-conditioned cross-head communication as a distinct
computational primitive of matrix-valued transport. 

\end{abstract}
\section{Introduction}
\label{sec:introduction}

Graph Neural Networks (GNNs) \citep{GNN,NN4G,chebnet,MPNN,GCN} learn by propagating information along graph edges, with most message-passing architectures using scalar weights to modulate neighboring representations. Sheaf Neural Networks (SNNs) \citep{hansen2020sheafneuralnetworks,bodnar2022neural} generalize this paradigm by equipping edges with linear transport operators between local feature spaces, allowing neighboring representations to be transformed rather than merely reweighted. Introduced through cellular sheaf theory \citep{hansen2020toward}, this richer form of local transport was motivated by phenomena such as heterophily \citep{luan2024heterophilicgraphlearninghandbook} and oversmoothing \citep{cai2020note,oono2020graph,rusch2023survey}, and has since inspired a growing family of architectures. Yet, the classical sheaf construction couples several components: restriction maps connect node and edge spaces, their compositions induce transport between neighboring nodes, and the resulting transformations are assembled through the sheaf Laplacian. Consequently, the specific computational capability contributed by richer local transport remains unclear. While the original motivation for SNNs is rooted in sheaf diffusion and topology, recent architectures increasingly learn edge-dependent matrix transformations directly \citep{Copresheaf,ribeiro2026cooperative}. Moreover, recent evidence questions whether gains under heterophily and oversmoothing arise specifically from the learned sheaf structure \citep{caralt2026necessity,fiorini2026benchmarkingsheafneuralnetworks}. This motivates a more fundamental question: \emph{what is actually useful about sheaf transport?} We address it by isolating the local transformation induced by
learned sheaf structure. Rather than asking when SNNs outperform conventional GNNs, we ask which additional form of message passing matrix-valued transport enables and how it differs from standard graph propagation. This leads us from cellular sheaves to quiver representations \citep{Copresheaf}, where directed interactions are represented directly by linear maps. At this level, transport itself can be studied independently of the restriction-map factorization and diffusion dynamics of classical SNNs, while revealing a direct connection with multi-head attention.
If attention heads are treated as coordinates of the local transport space,
multi-head attention induces a diagonal map on each directed interaction:
each source head contributes only to the corresponding receiver head, even
though different heads may assign different weights to the same interaction.
Thus, standard attention spans only the diagonal subfamily of
matrix-valued head-space transport. Off-diagonal entries extend
this mechanism with edge-conditioned communication across
heads before neighborhood aggregation. We show that this edge-specific
mixing cannot, in general, be absorbed into a shared post-aggregation
projection, and therefore constitutes a distinct operation beyond standard
multi-head attention. Motivated by this, we introduce \textbf{\modelext{}} (\model{}), which augments standard attention with edge-conditioned off-diagonal routing across heads while preserving the original same-head paths. When the routing vanishes, \model{} reduces exactly to vanilla attention, providing a controlled way to study when cross-head transport is useful independently of other architectural changes.
We evaluate this inductive bias on algorithmic reasoning with CLRS \citep{velickovic2022clrs}, relational reasoning with STaR \citep{khalid2025systematic}, heterogeneous graph learning \citep{harper2015movielens}, and, as a contrast setting, classical heterophilic node classification \citep{platonov2023critical}. Together, these benchmarks probe edge-specific computation across relational composition, directed heterogeneous interactions, and algorithmic size generalization, while isolating whether heterophily alone is sufficient to make cross-head transport beneficial.

Our main contributions are:
\textbf{(i)} We use quiver representations to disentangle matrix-valued edge transport from the broader sheaf-diffusion construction, establishing a direct connection to multi-head attention.
\textbf{(ii)} We characterize matrix-valued transport as edge-conditioned communication across attention heads and prove that, in general,  it cannot be reduced to a shared post-aggregation projection.
\textbf{(iii)} We introduce \model{}, an architectural extension implementing this transport primitive, and evaluate it across relational, heterogeneous, and algorithmic reasoning, including compositional- and size-generalization regimes.

 \section{Background and Related Work}\label{sec:background}
We start by providing the required background on graph models, SNNs, graph quivers, and graph attention as message passing. A deeper discussion of related work is provided in Appendix~\ref{app:extended_related_work}.
\begin{figure}[h]
    \vspace{-0.4cm}
    \centering
    %
    \begin{subfigure}[b]{0.24\textwidth}
    \centering
    \includegraphics[width=\textwidth]{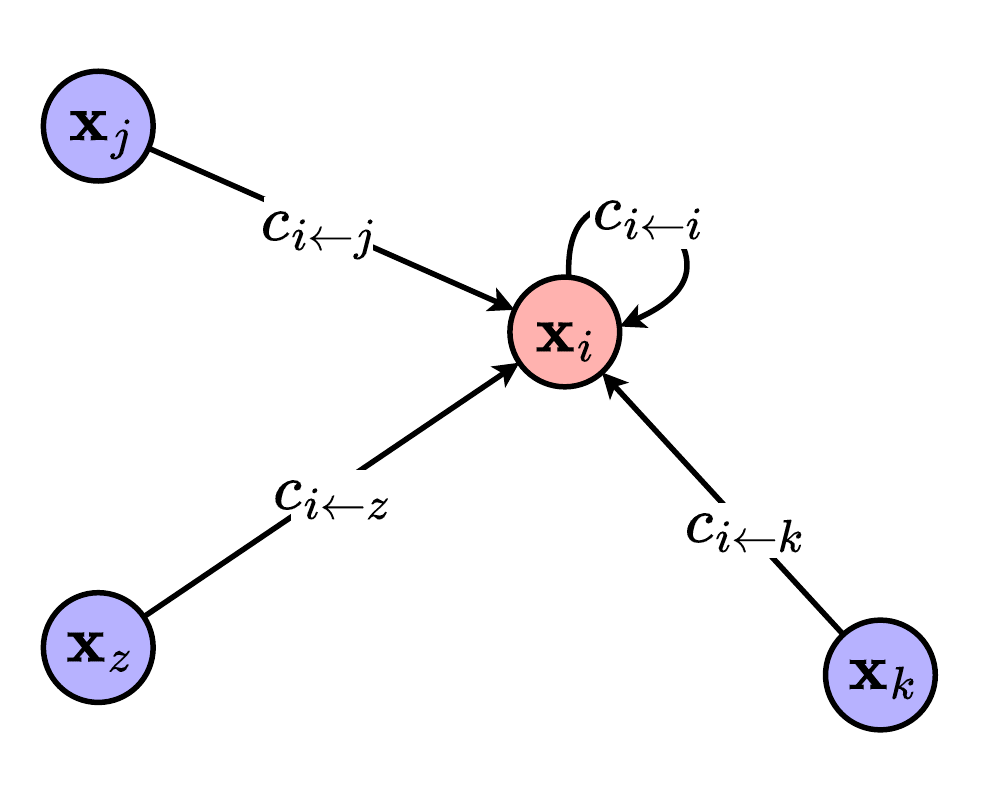}
    \caption{Convolutional\label{fig:conv}}
    \end{subfigure}\hspace{1cm}
    \begin{subfigure}[b]{0.24\textwidth}
    \centering
    \includegraphics[width=\textwidth]{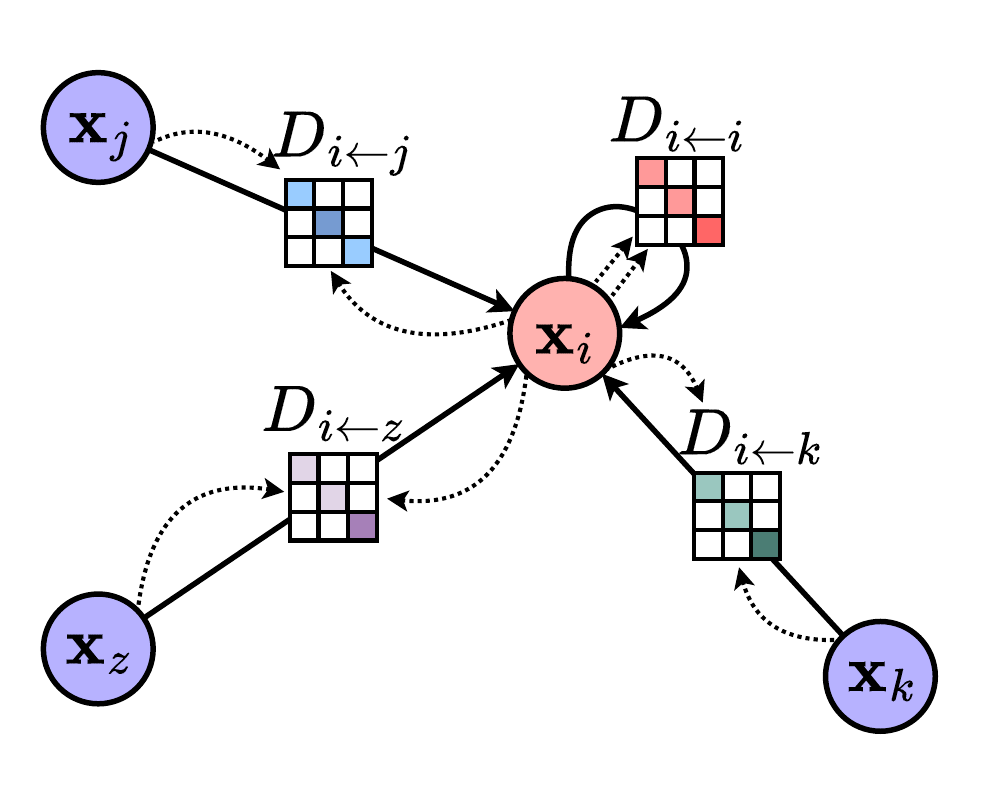}
    \caption{Multi-head Attention\label{fig:attn}}
    \end{subfigure}\hspace{1cm}
    \begin{subfigure}[b]{0.24\textwidth}
    \centering
    \includegraphics[width=\textwidth]{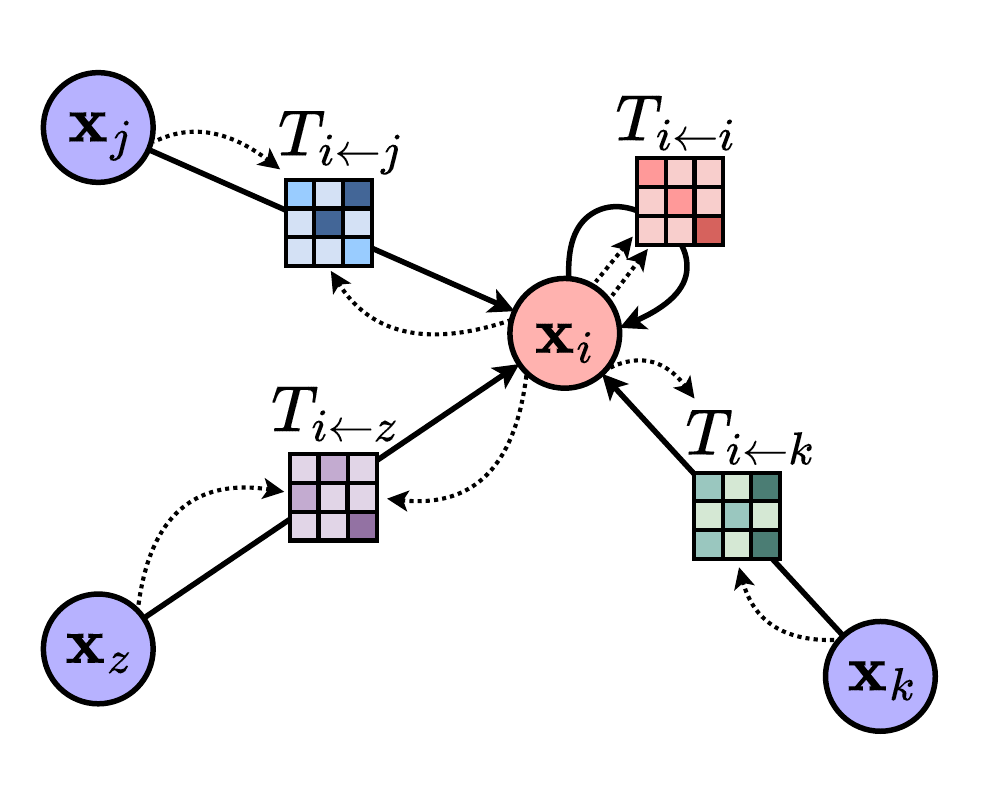}
    \caption{\textbf{\model{} (ours)}\label{fig:top-a_flavour}}
    \end{subfigure}
    
    \caption{
    \textbf{From Scalar Weighting to Matrix-valued Transport.}
    Comparison of 
    three graph neural network paradigms for a directed interaction
    \(j\to i\), with sender \(j\) and receiver \(i\).
    \emph{Convolutional} layers weight incoming features with fixed scalar coefficients
    \(c_{i\leftarrow j}\).
    \emph{Attentional} layers replace these constants with data-dependent coefficients
    \(\alpha_{i\leftarrow j}=a(\mathbf{x}_i,\mathbf{x}_j)\).
    \emph{\modelext{}} replaces the diagonal head-space operator
    \(D_{i\leftarrow j}\) with the matrix-valued transport
    \(T_{i\leftarrow j}
    =D_{i\leftarrow j}(I_H+\lambda\Omega_{i\leftarrow j})\),
    where \(\Omega_{i\leftarrow j}\) contains off-diagonal routes between
    attention heads. Standard multi-head attention is recovered when
    \(\Omega_{i\leftarrow j}=0\).
    }
    \vspace{-0.5cm}
    \label{fig:topological-attention-comparison}
\end{figure}

\textbf{Message Passing Neural Networks.}
Let $\mathcal{G}=(\mathcal{V},\mathcal{E})$ be a graph, where $\mathcal{V}$ is the set of nodes and $\mathcal{E}$ the set of edges. We write a directed interaction from sender node $j$ to receiver node $i$ as $j\to i$, and define the incoming neighborhood of node $i$ as $\mathcal{N}(i)=\{j:(j,i)\in\mathcal{E}\}$. Each node in the graph is associated with a hidden state vector  $\mathbf{x}_i\in\mathbb{R}^{d}$, which provides the node representation. GNNs are functions that learn the complex relational
information of a graph, typically framed through the Message Passing Neural Network (MPNN) framework \citep{MPNN,bronstein2021geometricdeeplearninggrids} where the representation for a node $u$ at layer $\ell$ is computed as:
\begin{equation}
    \mathbf{x}_i^{(\ell)}
    =
    \phi^{(\ell)}
    \left(
        \mathbf{x}_i^{(\ell-1)},
        \bigoplus^{(\ell)}
        \left\{
            \psi^{(\ell)}
            \left(\mathbf{x}_j^{(\ell-1)},\mathbf{x}_i^{(\ell-1)}\right)
            :
            j\in\mathcal{N}(i)
        \right\}
    \right)
\end{equation}
where $\psi^{(\ell)}(\mathbf{x}_j,\mathbf{x}_i)$ computes the message
$
m_{i\leftarrow j}^{(\ell)}$ from sender $j$ to receiver $i$,
$\bigoplus^{(\ell)}$ aggregates the messages in a permutation-invariant way,
and $\phi^{(\ell)}$ updates the node representation. Many widely used GNNs instantiate this framework by associating each neighbor with a scalar coefficient that modulates its contribution to the aggregated representation \citep{GCN,GAT}, as shown in \Cref{fig:conv,fig:attn}.

\textbf{Sheaf Neural Networks.}
SNNs \citep{hansen2020sheafneuralnetworks,bodnar2022neural} enrich graph message passing by associating nodes and edges with local vector spaces, called \emph{stalks}, together with linear maps between them. Given a cellular sheaf $\mathcal{F}$ over $\mathcal{G}$, each node $i\in\mathcal{V}$ and edge $e\in\mathcal{E}$ is assigned a stalk $\mathcal{F}(i)=\mathbb R^d$ and $\mathcal{F}(e)=\mathbb R^d$, respectively, while a restriction map $F_{i\to e}:\mathcal{F}(i)\rightarrow\mathcal{F}(e)$
is associated with every node-edge incidence. For two adjacent nodes $i$ and $j$ sharing an edge $e$, these restriction maps induce the node-to-node linear operator:
\begin{equation}
T_{i\leftarrow j}
=
F_{i\to e}^{\top}F_{j\to e} : \mathcal F(j)\to \mathcal F(e) \to \mathcal F (i)
\label{eq:sheaf_transport}
\end{equation}
which transforms the representation of node $j$ before it is aggregated at node $i$. Neural Sheaf Diffusion (NSD) \citep{bodnar2022neural} builds the propagation dynamics from the resulting sheaf Laplacian, providing a principled form of matrix-valued local communication between neighboring representations.

\textbf{Graph Quivers.} The node-to-node transport induced by SNNs can also be represented directly, without retaining its factorization through the edge stalk. A natural formalism for doing so is a representation of the directed graph quiver \citep{gabriel1972unzerlegbare, derksen2017introduction}. Let the computational graph be viewed as a quiver whose vertices are the nodes of \(\mathcal{G}\) and whose arrows are the directed interactions 
\(e: j\to i\), with source \(j\) and target \(i\). A quiver representation assigns a vector
space \(\mathcal{Q}(i)\) to every vertex and a linear map:
\begin{equation}
    T_{i\leftarrow j}:
    \mathcal{Q}(j)
    \longrightarrow
    \mathcal{Q}(i)
    \label{eq:quiver-arrow}
\end{equation}
to every directed arrow \(j\to i\). The corresponding neighborhood transport takes the form:
\begin{equation}
    \mathbf{x}_i^{(\ell)}
    =
    \sum_{j\in\mathcal{N}(i)}
    T_{i\leftarrow j}^{(\ell)}
    \mathbf{x}_j^{(\ell-1)}
    \label{eq:quiver_mp}
\end{equation}
The sheaf-induced message in \Cref{eq:sheaf_transport} is recovered by choosing:
\begin{equation}
    T_{i\leftarrow j}
    =
    F_{i\to e}^{\top}F_{j\to e}: \mathcal F(j) \to \mathcal F (i)
\end{equation}

The quiver representation therefore isolates the effective directed transport \(T_{i\leftarrow j}\) from the specific restriction-map factorization and diffusion operator that generate it in classical SNNs. Accordingly, we adopt this abstraction throughout the paper: we study the linear map carried directly by each oriented interaction, following the perspective
developed by \cite{Copresheaf}, who cast topological neural
message passing through local maps assigned to quiver arrows. 

\textbf{Graph Attention as Message Passing.}
Attention-based GNNs
\citep{GAT,dwivedi2021generalization,brody2022attentive,graphgps,graphtransformer}
make a sender's contribution depend on its interaction with the receiver.
This includes Graph Attention Networks (GATs)
\citep{GAT,brody2022attentive}, which typically attend over local
neighborhoods, and Graph Transformers (GTs)
\citep{dwivedi2021generalization,graphgps,graphtransformer}, which combine
attention with Transformer-style architectures and structural information.

Omitting layer superscripts, let \(V_j^{(h)}\in\mathbb R^C\) be the value
produced by source head \(h\), and \(s_{i\leftarrow j}^{(h)}\) its score for
the interaction \(j\to i\). Receiver-wise normalization gives:
\begin{equation}
    \alpha_{i\leftarrow j}^{(h)}
    =
    \frac{\exp s_{i\leftarrow j}^{(h)}}
    {\sum_{k\in\mathcal N_{\mathrm{att}}(i)}
    \exp s_{i\leftarrow k}^{(h)}}
    \label{eq:attention-normalization}
\end{equation}
where $\mathcal{N}_{\mathrm{att}}(i)=\mathcal{N}(i)$ for methods such as \cite{GAT} and $\mathcal{N}_{\mathrm{att}}(i)=\mathcal{V}$ for transformer-like architectures, \eg \cite{graphtransformer}. The corresponding head output is:
\begin{equation}
    M_i^{(h)}
    =
    \sum_{j\in\mathcal N_{\mathrm{att}}(i)}
    \alpha_{i\leftarrow j}^{(h)}V_j^{(h)},
    \label{eq:attention-head-update}
\end{equation}
where the scoring rule and attended neighborhood depend on the architecture. This formulation leaves the joint \(H\)-head representation space implicit. The quiver perspective makes it explicit, allowing us to identify the edge map induced by standard
multi-head attention, which we characterize next. 
Appendix~\ref{app:extended_related_work} provides a more in depth comparison with more recent attention mechanisms.

\section{Attention as Diagonal Head-Space Transport}
\label{sec:attention-transport}

The quiver formulation introduced in \Cref{sec:background} describes message
passing through linear maps attached to directed interactions. Multi-head
attention already comes with a natural local space on which such maps can
act: the space indexed by its \(H\) attention heads. We therefore take the
heads as the coordinates of the local transport space, while the
\(C\)-dimensional representation carried by each head is propagated in
parallel. In this scenario, for a sender \(j\), we stack the value representations, $V_j^{(h)}$, introduced in
\Cref{sec:background} as
\(V_j=[(V_j^{(1)})^\top,\ldots,(V_j^{(H)})^\top]^\top
\in\mathbb R^{H\times C}\), where \(V_j^{(h)}\in\mathbb R^C\) denotes the value carried by head \(h\) as before.
For the directed interaction \(j\to i\), the corresponding attention
coefficients define a diagonal matrix of the form:
\begin{equation}
    D_{i\leftarrow j}
    =
    \operatorname{Diag}
    \left(
        \alpha_{i\leftarrow j}^{(1)},
        \ldots,
        \alpha_{i\leftarrow j}^{(H)}
    \right)
    \in
    \mathbb R^{H\times H}
    \label{eq:attention-diagonal-operator}
\end{equation}
This diagonal organization of attention over the head space is illustrated in \Cref{fig:attn}. The \(H\) headwise updates in \Cref{eq:attention-head-update} can be written as:
\begin{equation}
    M_i^{\mathrm{att}}
    =
    \sum_{j\in\mathcal N_{\mathrm{att}}(i)}
    D_{i\leftarrow j}V_j
    \label{eq:attention-matrix-aggregation}
\end{equation}

The form expressed in \Cref{eq:attention-matrix-aggregation} makes the connection with quivers explicit. Once the attention coefficients are computed, they define a linear map on each directed interaction. Taking \(\mathcal Q(i)=\mathbb R^H\) as the local space at each
node, the interaction \(j\to i\) carries the linear map:
\begin{equation}
    T_{i\leftarrow j}^{\mathrm{att}}
    =
    D_{i\leftarrow j}
    :
    \mathbb R^H
    \longrightarrow
    \mathbb R^H
    \label{eq:attention-quiver-map}
\end{equation}
The within-head coordinates are carried along unchanged by this head-space
map. Under the decomposition
\(\mathbb R^{H C}\simeq\mathbb R^H\otimes\mathbb R^C\), the
corresponding operator on the full value representation is therefore
\(D_{i\leftarrow j}\otimes I_C\).

\begin{restatable}[Multi-head attention as diagonal transport]
{proposition}{attentiondiagonal}
\label{prop:attention-diagonal}
Before any shared post-aggregation readout, normalized multi-head attention
is diagonal transport on its native head space. In particular, for every
directed interaction \(j\to i\) and receiver head
\(r\in\{1,\ldots,H\}\):
\begin{equation}
    \left(
        D_{i\leftarrow j}V_j
    \right)^{(r)}
    =
    \alpha_{i\leftarrow j}^{(r)}
    V_j^{(r)}
    \label{eq:attention-diagonal-action}
\end{equation}
where \((\cdot)^{(r)}\) denotes the \(r\)-th head component.
\end{restatable}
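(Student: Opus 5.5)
The plan is a direct computation from the definitions, organized in three steps.

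\emph{Step 1: the row action.} We view $V_j\in\mathbb R^{H\times C}$ as a matrix whose $h$-th row is $(V_j^{(h)})^\top$. Left multiplication by any $A\in\mathbb R^{H\times H}$ acts on rows: the $r$-th row of $AV_j$ is $\sum_{h=1}^H A_{rh}(V_j^{(h)})^\top$. Specializing to $A=D_{i\leftarrow j}$ from \Cref{eq:attention-diagonal-operator}, we have $A_{rh}=\delta_{rh}\,\alpha^{(r)}_{i\leftarrow j}$. The sum therefore collapses to the single term $h=r$, which gives \Cref{eq:attention-diagonal-action}. Equivalently, under $\mathbb R^{HC}\simeq\mathbb R^H\otimes\mathbb R^C$, the operator $D_{i\leftarrow j}\otimes I_C$ acts on $e_h\otimes V_j^{(h)}$ as $\alpha^{(h)}_{i\leftarrow j}\,e_h\otimes V_j^{(h)}$. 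So each block is preserved up to a scalar, and no cross-head term appears.

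\emph{Step 2: the aggregated update.} Because the coefficients $\alpha^{(h)}_{i\leftarrow j}$ are fixed once computed via \Cref{eq:attention-normalization}, the map $V_j\mapsto D_{i\leftarrow j}V_j$ is linear. Summing the identity from Step 1 over $j\in\mathcal N_{\mathrm{att}}(i)$ and taking the $r$-th row yields
\[
\bigl(M_i^{\mathrm{att}}\bigr)^{(r)}=\sum_{j\in\mathcal N_{\mathrm{att}}(i)}\alpha^{(r)}_{i\leftarrow j}V_j^{(r)}=M_i^{(r)}.
\]
This shows that \Cref{eq:attention-matrix-aggregation} reproduces the headwise updates of \Cref{eq:attention-head-update} exactly. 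Hence, prior to any readout, the edge map is $T^{\mathrm{att}}_{i\leftarrow j}=D_{i\leftarrow j}$, as in \Cref{eq:attention-quiver-map}, and it is diagonal on the head space. The softmax normalization plays no role beyond producing scalars: each head's score $s^{(h)}$ depends only on that head, so $D_{i\leftarrow j}$ is well defined as a diagonal matrix.

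\emph{Step 3: the qualifier.} The main subtlety is the phrase ``before any shared post-aggregation readout''. An output projection $W_O$ acting on the concatenated heads mixes heads only after the sum over $j$. Such a projection is a node-independent map applied to $M_i^{\mathrm{att}}$, not an edge map, and so it does not alter the per-edge operator $D_{i\leftarrow j}$. I would state this explicitly rather than prove anything further about it, since the statement restricts attention to the pre-readout stage.
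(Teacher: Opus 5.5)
Your proposal is correct and follows essentially the same argument as the paper: the paper's proof is exactly your Step 1 (left multiplication by the diagonal $D_{i\leftarrow j}$ scales each head row by its own coefficient, so no source head $s\neq r$ reaches receiver head $r$), and your Steps 2--3 only make explicit the aggregation identity and the role of the readout, which the paper states in the main text. One small aside: $D_{i\leftarrow j}$ is diagonal by construction and would remain so even if the scores mixed heads (as in Talking-Heads Attention), so your justification that each head's score depends only on that head is true for these backbones but not needed.
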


A proof of the proposition is given in Appendix~\ref{app:proof-diagonal-attention}. 
Although different heads may assign different weights to the same
interaction, the correspondence between source and receiver coordinates is
fixed: information carried by source head \(r\) is passed only to receiver
head \(r\) along that edge. Standard multi-head attention therefore occupies
the diagonal subfamily of linear maps on the head space. 
Similarly, the multi-head copresheaf attention construction of \citet{Copresheaf} uses edge-dependent linear maps that act independently within the value-channel space of each head (see Appendix~\ref{app:extended_related_work} for further discussion).
\Cref{fig:topological-attention-comparison} summarizes this progression from scalar weighting
to diagonal attention and, ultimately, matrix-valued head-space transport.

\begin{figure*}[t]
    \centering
    \includegraphics[width=0.99\textwidth]{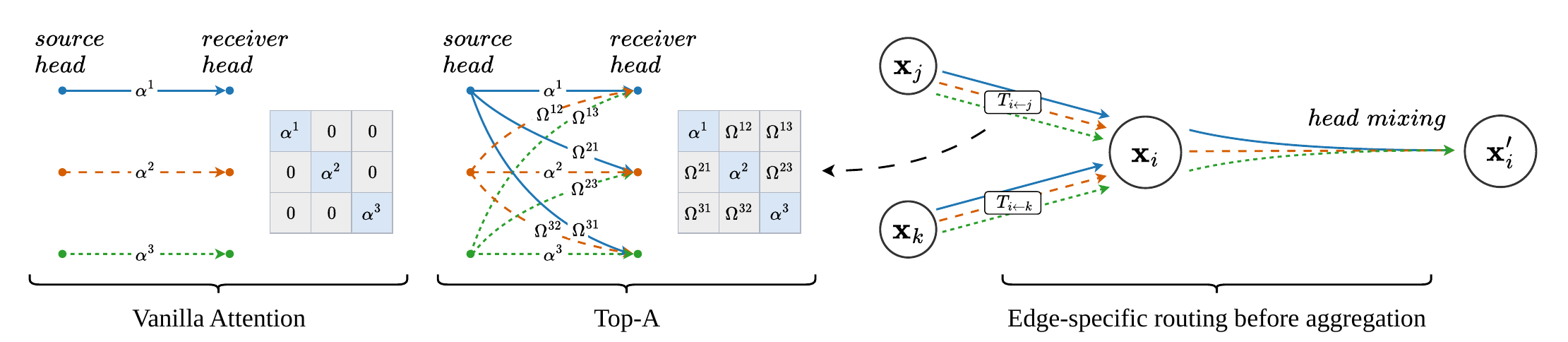}
    \caption{
    \textbf{Vanilla attention versus \model{} Cross-head Routing.}
    Standard multi-head attention restricts information transfer to corresponding source and receiver heads, yielding a diagonal transport structure.
    \model{} generalizes this mechanism through learnable cross-head coefficients $\Omega^{rs}$, enabling source head $s$ to contribute to receiver head $r$.
    The resulting edge-specific transport ${T}_{i \leftarrow j}$ is applied before neighborhood aggregation, followed by head mixing to produce the updated representation $h_i'$.
    }
    \label{fig:topa_overview}
\end{figure*}

\textbf{Beyond diagonal head transport.}
The same representation also makes clear what lies outside this diagonal
family. Consider a general linear map associated with the interaction
\(j\to i\), \(T_{i\leftarrow j}\in\mathbb R^{H\times H}\).
Its action on the sender values is:
\begin{equation}
    \left[
        T_{i\leftarrow j}V_j
    \right]^{(r)}
    =
    \sum_{s=1}^{H}
    T_{i\leftarrow j}^{rs}
    V_j^{(s)}
    \label{eq:general-head-transport-components}
\end{equation}
An off-diagonal entry
\(T_{i\leftarrow j}^{rs}\), with \(r\neq s\), allows information carried
by source head \(s\) to contribute to receiver head \(r\) on that
interaction. Since the map is attached to a directed edge, these
cross-head correspondences may vary across the neighborhood: two senders
connected to the same receiver can induce different transformations before
their contributions are aggregated. On the full value representation, the edge operator becomes \(T_{i\leftarrow j}\otimes I_C\): it mixes information across heads while leaving the \(C\)-dimensional coordinates within each head unchanged. Standard attention is recovered by setting
\(T_{i\leftarrow j}=D_{i\leftarrow j}\), while the off-diagonal entries of
\(T_{i\leftarrow j}\) introduce the additional edge-dependent routes through
which information can move between heads before aggregation, as illustrated in \Cref{fig:topa_overview}.
\Cref{sec:topological-attention} defines a parameterization of these
routes that preserves the original diagonal attention paths.


\textbf{Relation to sheaf transport.}
%
The head-space perspective also sharpens the connection to the sheaf
construction of \Cref{sec:background}. Identifying node stalk coordinates with attention heads, the sheaf-induced map
\(T_{i\leftarrow j}=F_{i\to e}^{\top}F_{j\to e}\) then defines a
matrix-valued transport over the same head space. Its
off-diagonal entries therefore have the same computational role described above: they route information across different head
coordinates. For an undirected edge, however, the two directional transports are coupled through the restriction maps and satisfy
\(T_{j\leftarrow i}=T_{i\leftarrow j}^{\top}\). By contrast, the directed maps studied here need not to satisfy this transpose-reciprocity constraint.
At the level of local operators, classical sheaf transport is therefore an undirected instance of matrix-valued head-space transport, whereas
the quiver formulation assigns independent maps to the two directed interactions. Appendix~\ref{app:sheaf-headspace} formalizes this
connection and proves that symmetric multi-head attention can be realized exactly as diagonal cellular-sheaf transport. Together, these
observations connect standard attention to the diagonal component of sheaf transport and motivate the more general directed, off-diagonal
head-space maps used in \model{}.

\section{\modelext{}}
\label{sec:topological-attention}

\Cref{sec:attention-transport} shows that standard multi-head
attention is restricted to diagonal transport in head space. 
In this section we present \modelext{} (\model{}) that extends this mechanism with edge-dependent communication across different
heads, while retaining the original same-head attention paths. These
cross-head routes are applied independently to each directed interaction
before the incoming messages are aggregated.

\subsection{Edge-conditioned cross-head transport}
\label{sec:topa-transport}

For a directed interaction \(e=(j\to i)\), write
\(D_e:=D_{i\leftarrow j}\), and let
\(\Omega_e:=\Omega_{i\leftarrow j}\in\mathbb R^{H\times H}\)
denote a routing matrix over the head space. We constrain
\(\operatorname{diag}(\Omega_e)=0\)\footnote{Here and throughout the paper, we use \(\operatorname{diag}(A)\) to denote the vector of diagonal entries of a matrix \(A\), and \(\operatorname{Diag}(v)\) for the diagonal matrix whose diagonal is \(v\).} so that the additional branch acts only between different heads. Starting from the diagonal attention operator \(D_e\), we define the \model{} transport associated with \(e\) as \(T_e:=T_{i\leftarrow j}\):
\begin{equation}
    T_e
    =
    D_e
    \left(
        I_H+\lambda\Omega_e
    \right),
    \qquad
    \lambda\geq 0
    \label{eq:topa-transport}
\end{equation}
where \(\lambda\) controls the strength of the cross-head routing. The
resulting operator \(T_e\) replaces \(D_e\) in the edge message.
%
Considering a receiver head \(r\), the corresponding component of the incoming
representation at node \(i\) is
\begin{equation}
    \left[
        M_i^{\model{}}
    \right]^{(r)}
    =
    \left[
        \sum_{j\in\mathcal N_{\mathrm{att}}(i)}
        T_{i\leftarrow j}V_j
    \right]^{(r)}
    =
    \sum_{j\in\mathcal N_{\mathrm{att}}(i)}
    \alpha_{i\leftarrow j}^{(r)}
    \left[
        V_j^{(r)}
        +
        \lambda
        \sum_{\substack{s=1\\s\neq r}}^{H}
        \Omega_{i\leftarrow j}^{rs}
        V_j^{(s)}
    \right].
    \label{eq:topa-components}
\end{equation}
The first term is the usual same-head contribution, while the second aggregates cross-head contributions from source heads \(s\neq r\) through
\(\Omega_{i\leftarrow j}^{rs}\). Since \(D_e\) acts on the left in
\Cref{eq:topa-transport}, \(\alpha_{i\leftarrow j}^{(r)}\) scales the entire
message entering receiver head \(r\), including cross-head contributions.
Attention therefore retains its receiver-side weighting role, while
\(\Omega_e\) determines how source heads are combined before entering that
receiver head. Since \(\operatorname{diag}(\Omega_e)=0\), the routing branch
affects only cross-head interactions, leaving the original attention
coefficients on the diagonal, i.e., \(\operatorname{diag}(T_e)=\operatorname{diag}(D_e)\). When \(\Omega_e=0\), \Cref{eq:topa-transport} reduces exactly to standard multi-head attention. \Cref{fig:topa_overview} illustrates this distinction, contrasting the diagonal head-to-head paths of standard attention with the edge-specific cross-head routes introduced by \model{} before aggregation.

\textbf{Learning the routing matrix.}
\label{sec:topa-routing}
Let \(\xi_e\in\mathbb R^p\) denote a representation of the directed
interaction \(e\), constructed from quantities already available to the
attention layer. 
In \model{}, we compute the routing matrix using a learned map
\(g_\theta:\mathbb R^p\to\mathbb R^{H\times H}\), as
\begin{equation}
    \Omega_e
    =
    \operatorname{off}\!\left(\tanh g_\theta(\xi_e)\right),
    \qquad
    \operatorname{off}(A)
    =
    A-\operatorname{Diag}(\operatorname{diag}(A)).
    \label{eq:topa-route-generator}
\end{equation}
The operator \(\operatorname{off}(\cdot)\) removes the diagonal, ensuring
that the learned routing acts only between different heads, while the
elementwise \(\tanh\) bounds the routing coefficients and allows them to
take either sign. Since \(\xi_e\) is specific to a directed interaction,
different edges can induce different routing patterns: \(j\to i\) and
\(i\to j\) need not share the same map, nor do different senders connected
to the same receiver. Backbone-specific definitions of \(\xi_e\) are given
in Appendix~\ref{app:backbone-integration}.

\subsection{Routing Before Neighborhood Aggregation}
\label{sec:topa-postaggregation}

We next investigate whether a post-aggregation linear projection can reproduce the same cross-head mixing. The key distintion lies in operational order: such a project acts only 
after all incoming messages have been combined, whereas \model{} applies an interaction-specific head-space map to each
incoming message before aggregation. 
To formalize this distinction, we stack sender \(j\)'s \(H\) head values as
\(v_j:=\big[(V_j^{(1)})^\top,\ldots,(V_j^{(H)})^\top\big]^\top
\in\mathbb R^{HC}\).
The corresponding operators on the concatenated representation are
\[
    \overline D_{i\leftarrow j}
    =
    D_{i\leftarrow j}\otimes I_C,
    \qquad
    \overline T_{i\leftarrow j}
    =
    T_{i\leftarrow j}\otimes I_C.
\]

\begin{restatable}[Shared post-aggregation equivalence]
{theorem}{postaggregationequivalence}
\label{thm:postaggregation-equivalence}

Fix a receiver \(i\) and the edge operators
\(\{\overline D_{i\leftarrow j},
\overline T_{i\leftarrow j}\}_{j\in\mathcal N_{\mathrm{att}}(i)}\).
A single post-aggregation map reproduces \model{}'s routed aggregation for every assignment of sender values if and only if some linear map \(A\in\mathbb R^{HC\times HC}\) satisfies
\begin{equation}
    \sum_{j\in\mathcal N_{\mathrm{att}}(i)}
    \overline T_{i\leftarrow j}v_j
    =
    A
    \sum_{j\in\mathcal N_{\mathrm{att}}(i)}
    \overline D_{i\leftarrow j}v_j,
    \label{eq:postaggregation-equivalence}
\end{equation}
if and only if the same map \(A\) reproduces the transport on every incoming edge:
\begin{equation}
    \overline T_{i\leftarrow j}
    =
    A\overline D_{i\leftarrow j}
    \qquad
    \text{for every }
    j\in\mathcal N_{\mathrm{att}}(i).
    \label{eq:topa-common-left-factor}
\end{equation}
\end{restatable}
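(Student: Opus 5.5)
The first equivalence is essentially definitional, and I will dispose of it first. A single post-aggregation map means a linear map $A\in\mathbb R^{HC\times HC}$ that is applied to the vanilla attention aggregate $\sum_j \overline D_{i\leftarrow j}v_j$. Saying that it reproduces \model{}'s routed aggregation $\sum_j \overline T_{i\leftarrow j}v_j$ for every assignment of sender values is exactly the requirement that \Cref{eq:postaggregation-equivalence} hold identically in $(v_j)_{j\in\mathcal N_{\mathrm{att}}(i)}$. The matrix form of the attention update in \Cref{eq:attention-matrix-aggregation}, lifted via $\otimes I_C$, identifies the right-hand side with the standard aggregate. No further work is needed beyond stating this unpacking carefully, including that the identity must hold for all tuples of sender values rather than for one fixed tuple.

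The substantive part is the equivalence between \Cref{eq:postaggregation-equivalence} (for all sender values) and the per-edge factorization \Cref{eq:topa-common-left-factor}. The backward direction is immediate. If $\overline T_{i\leftarrow j}=A\overline D_{i\leftarrow j}$ for every $j$, then summing over $j$ and pulling $A$ out by linearity gives \Cref{eq:postaggregation-equivalence} for every choice of $(v_j)$.

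For the forward direction I will exploit the fact that the identity holds for all tuples of sender values, and that the senders enter as independent variables. Fix a sender $k\in\mathcal N_{\mathrm{att}}(i)$ and an arbitrary $u\in\mathbb R^{HC}$, and choose the assignment $v_k=u$, $v_j=0$ for $j\neq k$. The identity then collapses to $\overline T_{i\leftarrow k}u=A\overline D_{i\leftarrow k}u$. Since $u$ is arbitrary, the two matrices coincide, which gives $\overline T_{i\leftarrow k}=A\overline D_{i\leftarrow k}$ with the same $A$ for every $k$.

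The only step that needs care is making explicit that the quantifier ranges over independent sender values. Distinct neighbors $j$ are treated as distinct indices, so that setting one $v_j$ and zeroing the others is a legitimate assignment; the attention coefficients are held fixed because the edge operators are fixed in the statement. I will also note that neither positivity nor invertibility of $D$ is needed for the equivalence itself. Invertibility becomes relevant only in later consequences such as $A=\overline T_{i\leftarrow j}\overline D_{i\leftarrow j}^{-1}$ having to be edge-independent, and I will flag that as a remark rather than as part of the proof.
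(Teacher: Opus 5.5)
Your proposal is correct and follows the paper's proof essentially verbatim: you get the backward direction from linearity of $A$ over the sum, and the forward direction by zeroing every sender value except $v_j$ (legitimate because the edge operators are held fixed), which yields $\overline T_{i\leftarrow j}=A\overline D_{i\leftarrow j}$ for each $j$. Your remark that invertibility of $D_{i\leftarrow j}$ is needed only for the explicit form of $A$, not for the equivalence itself, also matches the paper's own remark.
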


\Cref{thm:postaggregation-equivalence} gives an exact criterion for replacing edge-specific pre-aggregation routing with a shared linear map after standard attention aggregation, for fixed attention and transport operators. Because this map acts only after all incoming messages have been merged, it must implement the routed transformation of every incoming edge through the same left factor. Importantly, we allow \(A\) to be any receiver-specific linear map, so the comparison is strictly more permissive than the standard output projection used by the attention backbone.

For the transport in \Cref{eq:topa-transport}, ordinary unmasked
softmax attention with finite logits makes
\(D_{i\leftarrow j}\) invertible. Hence,
\Cref{eq:topa-common-left-factor} is equivalent to
\begin{equation}
    A
    =
    \left[
        D_{i\leftarrow j}
        \left(
            I_H+\lambda\Omega_{i\leftarrow j}
        \right)
        D_{i\leftarrow j}^{-1}
    \right]
    \otimes I_C
    \label{eq:topa-common-factor}
\end{equation}
for every incoming sender \(j\). A shared post-aggregation map therefore exists only when all incoming edges induce
the same effective transformation. Because \(\Omega_{i\leftarrow j}\) may vary across directed interaction, \model{} neither enforces nor generally satisfies this condition. The underlying limitation is information loss: aggregation erases the identity of the individual
incoming messages. A shared readout can mix only the combined representation, whereas \model{} transforms each edge separately beforehand. This ordering produces the essential computational advantage of pre-aggregation routing. Appendix~\ref{app:postaggregation-proof} provides the full proof and a two-edge example.




\section{Experiments}\label{sec:exps}



We evaluate the proposed head-space transport across complementary computational settings. In \Cref{sec:STAR}, a
controlled relational reasoning benchmark tests whether edge-conditioned
cross-head transport improves relation classification, including at reasoning depths beyond those seen during training. In \Cref{sec:Het}, a real-world heterogeneous graph task assesses whether communication across head subspaces provides a useful inductive bias for integrating information from nodes and relations with distinct semantics. In \Cref{sec:CLRS}, we look into algorithmic reasoning tasks, investigating which algorithmic procedures benefit from the addedrouting capacity of \model{} and whether these gains extend to larger problem instances. As a contrast setting, Appendix~\ref{app:platonov} evaluates whether cross-head transport also benefits standard heterophilic
node-classification benchmarks. Task details, including
hyperparameter settings and search spaces, are reported in
Appendices~\ref{app:star_description}, \ref{app:movielens}, and
\ref{app:clrs}. 

\textbf{Attention Backbones.}
Because \model{} is agnostic to the underlying attention mechanism, we instantiate it with GATv2 \citep{brody2022attentive} and a Graph Transformer (GT) \citep{graphtransformer}, comparing each backbone with its corresponding \model{} variants. 
In both architectures, the interaction representation \(\xi_e\) in \Cref{eq:topa-route-generator} is constructed from quantities already available to the respective attention mechanism. For GATv2, \(\xi_e\) uses the same directed sender--receiver context as attention computation; for GT, it combines the receiver query and the edge-conditioned sender key. The resulting routing matrix \(\Omega_e\) remains specific to each directed interaction, allowing distinct edges to induce distinct cross-head transformations. The precise backbone-specific constructions are provided in Appendix~\ref{app:backbone-integration}, while initialization and computational
cost are discussed in Appendix~\ref{app:initialization-complexity}.

\subsection{Relational Reasoning on STaR}

\begin{figure}[t]


\centering

\begin{subfigure}[h]{0.49\linewidth}
    \centering
    \includegraphics[width=0.8\linewidth]{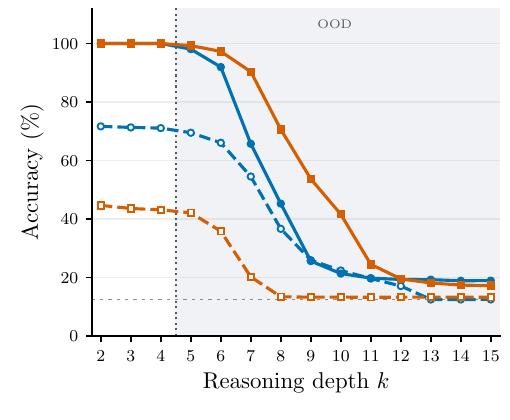}
    \caption{RCC-8}
    \label{fig:star-ood-rcc8}
\end{subfigure}
\hfill
\begin{subfigure}[h]{0.49\linewidth}
    \centering
    \includegraphics[width=0.8\linewidth]{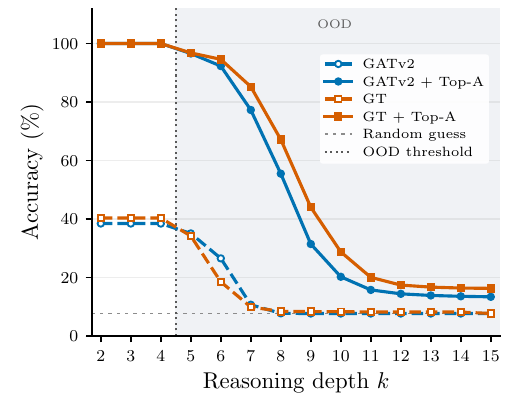}
    \caption{Interval Algebra}
    \label{fig:star-ood-interval}
\end{subfigure}

\caption{\emph{Depth Generalization on STaR.}
For each reasoning depth \(k\), accuracy is averaged over \(b\in\{1,2,3\}\) and three seeds. Models perform 15 recurrent rounds with shared-weights and are trained on \(k\in\{2,3,4\}\); shading denotes unseen depths. Solid lines denote \model{}, and dashed lines the corresponding baselines.}
\label{fig:star-depth-ood}

\end{figure}

\label{sec:STAR}
To evaluate the proposed parameterization in a setting that demands relational composition, we use STaR \citep{khalid2025systematic}, a controlled reasoning benchmark in which the edge relations determine how relational information is composed along a path. STaR tests systematic generalization over synthetic relation graphs in two domains, \ie RCC-8 and Allen's Interval Algebra (IA). Each instance provides an edge-labelled graph and a query node pair, and requires inferring the relation between them. Solving this task entails composing relations along each path and integrating the partial, potentially disjunctive evidence from multiple paths. STaR is therefore well-suited to testing edge-conditioned transport: distinct relations call for distinct transformations of propagated information, while multiple paths require the resulting relational states to be combined.

All evaluated models are trained on graphs with path lengths
\(k\in\{2,3,4\}\) and numbers of paths \(b\in\{1,2,3\}\), then evaluated on longer reasoning chains to measure generalization beyond the depths seen
during training. Within each backbone family, the baseline and \model{}
variant use identical input representations, backbone architectures, prediction heads, and optimization procedures. Appendix~\ref{app:star_description} provides further details.

\noindent\textbf{Results.}
\Cref{tab:star-results} show that \model{} improves mean accuracy for both attention backbones across both relational domains, supporting the value of the proposed head-space parameterization for relational
reasoning. Standard relation-aware attention can condition its scalar coefficients on the edge label; \model{} additionally lets each labelled interaction induce off-diagonal routing across head subspaces before aggregation. This mechanism can support relation-specific transformations and the integration of evidence propagated along multiple paths. 
\Cref{fig:star-depth-ood} plots accuracy against reasoning depth \(k\), 
showing that \model{}'s gains extend across multiple unseen depths.
On both RCC-8 and IA, the \model{} variants outperform their corresponding attention baselines within the training-depth range \(k\in\{2,3,4\}\) and across a substantial range of unseen depths, although the margin narrows on the most challenging configurations. Since \(b\) remains within its training range, this analysis isolates increasing reasoning depth from growth in the number of paths.

\begin{table*}[t]
\centering



\begin{minipage}[t]{0.52\textwidth}
\centering
\caption{\emph{STaR Relation-Classification Accuracy}. We evaluate over 3 seeds, reporting $\pm$ std $\times 100$. Bold indicates the best-performing models.}
\label{tab:star-results}
\small
\resizebox{\linewidth}{!}{%
\begin{tabular}{llcc}
\toprule
Architecture & Variant & RCC-8 & Interval Algebra \\
\midrule
GATv2 & -- & $39.79 \pm 23.83$ & $18.05 \pm 17.94$ \\
 & Top-A & $\boldsymbol{51.79 \pm 7.14}$ & $\boldsymbol{52.44 \pm 1.22}$ \\
\midrule
GT & -- & $23.34 \pm 17.68$ & $18.03 \pm 15.69$ \\
 & Top-A & $\boldsymbol{60.24 \pm 10.21}$ & $\boldsymbol{57.17 \pm 2.54}$ \\
\bottomrule
\end{tabular}%
}

\end{minipage}
\hfill
\begin{minipage}[t]{0.44\textwidth}



\centering
\caption{\emph{Link Prediction on MovieLens-100K.} We evaluate over 10 seeds,
reporting mean $\pm$ std $\times 100$.}
\label{tab:heterogeneous}

\small
\resizebox{\linewidth}{!}{%
\begin{tabular}{llcc}
\toprule
Model & Variant & AUROC & AUPR \\
\midrule
GATv2 & --
      & $88.70 \pm 0.83$
      & $86.48 \pm 1.07$ \\

 & Top-A
      & $\mathbf{90.46 \pm 0.42}$
      & $\mathbf{88.71 \pm 0.51}$ \\
\midrule
GT & --
   & $81.26 \pm 3.67$
   & $76.24 \pm 5.21$ \\

 & Top-A
   & $\mathbf{90.20 \pm 0.85}$
   & $\mathbf{88.38 \pm 0.99}$ \\
\bottomrule
\end{tabular}%
}
\end{minipage}

\end{table*}

\subsection{Relation-Conditioned Routing on Heterogeneous Graphs}
\label{sec:Het}
After establishing the benefits of edge-conditioned cross-head transport in STaR's
controlled relational setting, we evaluate the same
parameterization on a real-world heterogeneous graph task. We use
MovieLens-100K \citep{harper2015movielens}, where users and movies forms semantically distinct node types connected by directed relations. This setting complents STaR by testing whether communication across latent head subspaces helps integrate information with different node and relation semantics. We formulate MovieLens-100K as a binary link-prediction task and compare the same two attention backbones with their corresponding \model{} variants. Within each backbone family, all models use identical heterogeneous input encoders, backbone, decoder, and per-run training protocol. The \model{} variants additionally tune the transport scale \(\lambda\), with the search space and selected configurations reported in Appendix~\ref{app:movielens}.


\noindent\textbf{Results.}
As shown in \Cref{tab:heterogeneous}, \model{} improves mean AUROC and AUPR for both attention backbones across ten evaluation seeds. For GATv2, cross-head transport yields average gains of approximately \(2.0\) points across the two metrics; for GT, the gains increase to \(10.5\) points on average. GT+\model{} closes the performance gap with GATv2 and performs comparably to GATv2+\model{}, while reducing the variability of the original GT backbone. The transport analysis in Appendix~\ref{app:movielens}
(\Cref{fig:movielens-asym}) further reveals distinct effective head-space
transport for the \emph{rates} and \emph{rated-by} directions.
This directional asymmetry indicates that the two relations induce effective head-space transformations.
Detailed transport and optimization analyses are reported in Appendix~\ref{app:movielens}.


\subsection{CLRS Benchmark}
\label{sec:CLRS}
Having evaluated cross-head transport in controlled relational and real-world heterogeneous settings, we next investigate which computational procedures benefit from \model{}'s additional routing capacity. We use the CLRS Algorithmic Reasoning Benchmark \citep{velickovic2022clrs}, which comprises 30 classical algorithms spanning sorting, searching, dynamic programming, graph algorithms, string processing, and computational geometry. This diversity makes CLRS well suited to probing architectural inductive biases, as different algorithms require substantially different forms of information transport. Rather than expecting cross-head transport to provide uniform gains across all tasks, we use CLRS to identify when \model{}'s additional flexibility aligns with the computation required by the target algorithm. Following the standard benchmark protocol, models are trained on instances of size $n=16$ and evaluated out-of-distribution at $n=64$, testing whether the learned routing supports reusable computational procedures beyond the training scale.
Within each backbone family, all models share the same input encoders, processor architecture, decoder, and training protocol. Additional details are provided in Appendix~\ref{app:clrs}.

\noindent\textbf{Results.}
As shown in \Cref{tab:clrs-families}, the gains from cross-head transport are strongly task dependent; Appendix~\ref{app:clrs}, \Cref{tab:clrs-tuned}, reports the full results. 
The largest gains occur on sorting and searching, where GATv2+\model{} improves over GATv2 by more than $29$ and $12$ points, respectively. This trend is also evident with the GT, for which \model{} improves searching algorithms by more than $24$ points. 
By contrast, families such as greedy algorithms and dynamic programming show little or no improvement. Averaged over all 30 tasks, \model{} improves GATv2 by $6$ points and GT by $8.5$ points. This heterogeneous pattern supports our hypothesis that cross-head transport provides a task-dependent inductive bias rather than a uniform advantage across algorithms. 
The pronounced gain on sorting and searching suggest that these procedures are particularly well matched to flexible information exchange across latent head subspaces during the execution of the learned algorithm.

\providecommand{\model}{\textsc{OffDiag}}
\begin{table*}[h]
\centering
\small
\setlength{\tabcolsep}{4pt}
\caption{CLRS-30 test scores (official CLRS score; \%, $\uparrow$) after separate hyperparameter optimization, grouped by the benchmark's eight algorithm families. Entries are mean $\pm$ sample SD across 20 seeds, averaging tasks equally within each seed. Overall averages all 30 tasks equally. $\Delta$ is the improvement from adding \model{} in percentage points, computed before rounding. Bold marks the higher displayed mean within each architecture.}
\label{tab:clrs-families}
\resizebox{0.95\linewidth}{!}{%
\begin{tabular}{lcrrrrrr}
\toprule
Family & Tasks & GATv2 & GATv2 + \model{} & $\Delta$ & GT & GT + \model{} & $\Delta$ \\
\midrule
Sorting & 4 & $25.18 \pm 7.62$ & $\mathbf{54.42} \pm 6.77$ & $+29.24$ & $23.81 \pm 9.40$ & $\mathbf{54.17} \pm 8.43$ & $+30.36$ \\
Searching & 3 & $42.31 \pm 4.46$ & $\mathbf{54.72} \pm 4.01$ & $+12.41$ & $31.92 \pm 1.37$ & $\mathbf{56.24} \pm 2.62$ & $+24.32$ \\
Divide and conquer & 1 & $58.10 \pm 4.78$ & $\mathbf{63.61} \pm 3.34$ & $+5.51$ & $57.19 \pm 4.66$ & $\mathbf{62.91} \pm 3.73$ & $+5.72$ \\
Greedy & 2 & $\mathbf{74.22} \pm 2.56$ & $73.92 \pm 2.80$ & $-0.30$ & $\mathbf{74.98} \pm 2.20$ & $73.94 \pm 2.83$ & $-1.03$ \\
Dynamic programming & 3 & $73.54 \pm 4.00$ & $\mathbf{73.71} \pm 5.92$ & $+0.17$ & $\mathbf{76.34} \pm 4.72$ & $70.02 \pm 7.28$ & $-6.32$ \\
Graphs & 12 & $60.68 \pm 1.54$ & $\mathbf{61.86} \pm 2.04$ & $+1.18$ & $54.51 \pm 1.62$ & $\mathbf{60.51} \pm 1.84$ & $+5.99$ \\
Strings & 2 & $1.39 \pm 0.85$ & $\mathbf{1.54} \pm 0.78$ & $+0.15$ & $1.40 \pm 0.79$ & $\mathbf{1.92} \pm 0.83$ & $+0.53$ \\
Geometry & 3 & $70.26 \pm 1.94$ & $\mathbf{72.63} \pm 2.40$ & $+2.36$ & $70.04 \pm 3.44$ & $\mathbf{71.59} \pm 4.17$ & $+1.55$ \\
\midrule
Overall & 30 & $53.22 \pm 1.26$ & $\mathbf{59.26} \pm 1.48$ & $+6.04$ & $49.81 \pm 1.73$ & $\mathbf{58.36} \pm 1.20$ & $+8.56$ \\
\bottomrule
\end{tabular}%
}
\end{table*}

\section{Conclusion}\label{sec:conclusion}

We revisited matrix-valued transport as a principled generalization of multi-head attention beyond its diagonal structure in head space. By allowing off-diagonal maps, edges can coordinate information flow across heads before aggregation. This perspective led to \model{}, a direct realization of the resulting transport primitive that retains standard attention as a special case. Across relational, heterogeneous, and algorithmic reasoning tasks, this richer routing mechanism consistently provides a useful inductive bias. More broadly, our results establish a direct connection between sheaf transport and attention through a simple yet powerful principle: edges should determine not only how much information is transmitted, but also how it is routed across heads.

\bibliography{iclr2027_conference}
\bibliographystyle{iclr2027_conference}

\clearpage
\appendix
%

\newcommand{\appendixsummary}{%
  \etocsettocdepth.toc{subsubsection}%
  \begin{center}
    {\LARGE\bfseries\appendixpapertitle\par}
    \vspace{0.4em}
    {\Large\bfseries Supplementary Material\par}
  \end{center}
  \begingroup
    \etocsetnexttocdepth{subsubsection}%
    \etocsettocstyle{\section*{Appendix}}{}%
    \tableofcontents
  \endgroup
  \clearpage
}

\providecolor{takeawayblue}{RGB}{48,98,168}
\providecolor{takeawaygray}{RGB}{247,247,247}

\ifcsname takeawaybox\endcsname\else
\newtcolorbox{takeawaybox}{
    enhanced,
    colback=takeawaygray,
    colframe=takeawaygray,
    boxrule=0pt,
    borderline west={2.5pt}{0pt}{takeawayblue},
    sharp corners,
    left=10pt,
    right=10pt,
    top=7pt,
    bottom=7pt,
    before skip=8pt,
    after skip=8pt
}
\fi

\providecolor{Green}{RGB}{0,128,0}
\providecolor{Red}{RGB}{200,0,0}
\providecommand{\cmark}{\textcolor{Green}{\ding{51}}}
\providecommand{\xmark}{\textcolor{Red}{\ding{55}}}

\providecolor{Top1}{HTML}{0072B2}
\providecolor{Top2}{HTML}{E69F00}
\providecolor{Top3}{HTML}{CC79A7}
\providecolor{ufill}{HTML}{E1D5E7}
\providecolor{uborder}{HTML}{9673A6}
\providecolor{vfill}{HTML}{D5E8D4}
\providecolor{vborder}{HTML}{82B366}

\sisetup{
    mode=text,
    table-alignment-mode=none,
    reset-text-family=false,
    reset-text-series=false,
    reset-text-shape=false,
    table-number-alignment=center,
    table-align-comparator=false,
    table-align-text-pre=false,
    table-align-text-post=false,
    round-mode=uncertainty,
    round-precision=3
}

\providecolor{proofbar}{RGB}{225,225,225}
\providecolor{theorembar}{RGB}{128,0,128}
\providecolor{propositionbar}{RGB}{0,100,0}
\providecolor{definitionbar}{RGB}{0,0,200}
\providecolor{lemmabar}{RGB}{255,140,0}
\providecolor{corollarybar}{RGB}{204,153,255}
\providecolor{remarkbar}{RGB}{0,139,139}
\providecolor{assumptionbar}{RGB}{178,34,34}

%
%

\newenvironment{restatedstatementbox}[1]{%
  \ifshowtheorembars
    \begin{tcolorbox}[mybarstyle=#1]%
  \else
    \par\addvspace{\topsep}%
  \fi
}{%
  \ifshowtheorembars
    \end{tcolorbox}%
  \else
    \par\addvspace{\topsep}%
  \fi
}

\newenvironment{restatedtheorem}[2][]{%
  \begin{restatedstatementbox}{theorembar}%
  \noindent\textbf{Theorem~\ref{#2}}%
  \if\relax\detokenize{#1}\relax
    \textbf{.}%
  \else
    \textbf{ (#1).}%
  \fi
  \itshape\ignorespaces
}{%
  \end{restatedstatementbox}
}

\newenvironment{restatedproposition}[2][]{%
  \begin{restatedstatementbox}{propositionbar}%
  \noindent\textbf{Proposition~\ref{#2}}%
  \if\relax\detokenize{#1}\relax
    \textbf{.}%
  \else
    \textbf{ (#1).}%
  \fi
  \itshape\ignorespaces
}{%
  \end{restatedstatementbox}
}

\newenvironment{restatedlemma}[2][]{%
  \begin{restatedstatementbox}{lemmabar}%
  \noindent\textbf{Lemma~\ref{#2}}%
  \if\relax\detokenize{#1}\relax
    \textbf{.}%
  \else
    \textbf{ (#1).}%
  \fi
  \itshape\ignorespaces
}{%
  \end{restatedstatementbox}
}

\newenvironment{restatedcorollary}[2][]{%
  \begin{restatedstatementbox}{corollarybar}%
  \noindent\textbf{Corollary~\ref{#2}}%
  \if\relax\detokenize{#1}\relax
    \textbf{.}%
  \else
    \textbf{ (#1).}%
  \fi
  \itshape\ignorespaces
}{%
  \end{restatedstatementbox}
}

\providecommand*{\theoremautorefname}{Theorem}
\providecommand*{\lemmaautorefname}{Lemma}
\providecommand*{\propositionautorefname}{Proposition}
\providecommand*{\definitionautorefname}{Definition}
\providecommand*{\corollaryautorefname}{Corollary}
\providecommand*{\remarkautorefname}{Remark}
\providecommand*{\exampleautorefname}{Example}
\providecommand*{\assumptionautorefname}{Assumption}

\makeatletter
\@ifundefined{ifshowcomments}{\newboolean{showcomments}}{}
\makeatother
\setboolean{showcomments}{true}
\providecommand{\fr}[1]{\textcolor{red}{\textbf{[FR: #1]}}}

\appendixsummary
\section{Extended Related Work}
\label{app:extended_related_work}

This section expands on the related work introduced in \Cref{sec:background}, providing a more detailed discussion of the literature most closely connected to our contribution.

\paragraph{Relation to copresheaf attention.}
\citet{Copresheaf} formulate neural message passing through copresheaves,
associating local representation spaces with nodes and linear maps with
directed interactions. This directed-map perspective is closely related to
the level of abstraction adopted in our quiver formulation. In their
copresheaf attention construction, however, transport acts within each
attention head: for an interaction \(j\to i\), a learned map
\(\rho_{i\leftarrow j}^{(h)}\in\mathbb{R}^{C\times C}\) transforms the value
carried by head \(h\) before attention-weighted aggregation, with the maps
evaluated independently across heads. On the full representation
\(\mathbb{R}^{HC}\simeq\mathbb{R}^{H}\otimes\mathbb{R}^{C}\), this gives a
block-diagonal operator of the form
\(\rho_{i\leftarrow j}^{(1)}\oplus\cdots\oplus
\rho_{i\leftarrow j}^{(H)}\).
\model{} instead identifies the head space itself as the transport space and
associates each directed interaction with an \(H\times H\) map. Its action on
the full representation is \(T_{i\leftarrow j}\otimes I_C\): information is
mixed across heads while the within-head coordinates are carried unchanged.
The two constructions therefore place matrix-valued transport on different
factors of the head--channel representation: copresheaf attention acts
within heads, whereas \model{} acts between them. This comparison concerns
the specific attention construction of \citet{Copresheaf}; the general
copresheaf formalism is substantially broader.

\paragraph{Relation to head-composition methods.}
To position \model{} with respect to existing forms of cross-head communication, we distinguish our matrix-valued transport view from methods that compose attention heads directly. Importantly, we note that our goal is not to introduce an alternative head-composition mechanism, but to study the computational role of matrix-valued transport through its connection with multi-head attention.

Talking-Heads Attention \citep{shazeer2020talking} introduces linear projections across the head dimension of the attention logits before softmax and of the normalized attention weights after softmax, while DCMHA \citep{xiao2024improving} extends this principle through input-dependent composition of attention scores and weights. Like \model{}, these methods enable communication across heads, but they do so at a different stage of the attention computation. Specifically, Talking-Heads Attention and DCMHA operate on the scalar quantities that determine how each value head weights its inputs, without directly transferring value representations between heads. By contrast, \model{} acts on the value-head representation associated with each directed interaction,
\(\bigl[T_{i\leftarrow j}V_j\bigr]^{(r)}
=\sum_{s=1}^{H}T_{i\leftarrow j}^{rs}V_j^{(s)}\),
allowing the value carried by source head \(s\) to contribute directly to receiver head \(r\) before neighborhood aggregation. The parameterization \(T_{i\leftarrow j}
=D_{i\leftarrow j}(I_H+\lambda\Omega_{i\leftarrow j})\), with
\(\operatorname{diag}(\Omega_{i\leftarrow j})=0\),
retains the original same-head attention paths while adding interaction-dependent off-diagonal routes between value heads, and recovers standard multi-head attention exactly when \(\Omega_{i\leftarrow j}=0\). From this perspective, Talking-Heads Attention and DCMHA compose attention patterns across heads, whereas \model{} provides a direct realization of edge-conditioned transport over the value-head space.

\section{Theoretical and Architectural Details}
\label{app:theory}

\subsection{From Cellular Sheaves to Symmetric Attention}
\label{app:sheaf-headspace}
The main text studies the transport map
\(T_{i\leftarrow j}\) carried by each directed interaction
\(j\to i\), abstracting away from the restriction-map factorization and
diffusion operator of a cellular sheaf. We now return to the cellular-sheaf
setting and show that the diagonal edge transport induced by a symmetric
form of multi-head attention admits an exact realization as sheaf transport.
This establishes a direct connection between diagonal head-space attention
and the local transport structure underlying Neural Sheaf Diffusion
\citep{hansen2020sheafneuralnetworks,bodnar2022neural}.

\begin{definition}[Symmetric multi-head attention]
A multi-head attention mechanism with \(H\) heads is \emph{symmetric} if,
for every undirected edge \(e=\{i,j\}\) and every head
\(h\in\{1,\ldots,H\}\),
\[
    \alpha_{i\leftarrow j}^{(h)}
    =
    \alpha_{j\leftarrow i}^{(h)}
    \geq 0 .
\]
The corresponding head-space operator is
\[
    D_{i\leftarrow j}
    =
    \operatorname{Diag}
    \left(
        \alpha_{i\leftarrow j}^{(1)},
        \ldots,
        \alpha_{i\leftarrow j}^{(H)}
    \right)
    \in\mathbb{R}^{H\times H}.
\]
By symmetry,
\(D_{i\leftarrow j}=D_{j\leftarrow i}\).
\end{definition}

The following proposition shows that every such diagonal edge operator can
be realized by cellular-sheaf restriction maps.

\begin{proposition}[Symmetric attention as sheaf transport]
Consider node and edge stalks of dimension \(H\),
\(\mathcal F(i)=\mathcal F(j)=\mathcal F(e)=\mathbb R^H\).
Given a symmetric multi-head attention mechanism, there exist restriction
maps such that the induced sheaf transport
\[
    T_{i\leftarrow j}
    =
    F_{i\to e}^{\top}F_{j\to e}
\]
satisfies
\[
    T_{i\leftarrow j}
    =
    D_{i\leftarrow j}
\]
for every edge \(e=\{i,j\}\).
\end{proposition}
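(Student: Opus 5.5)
The construction is explicit: define the restriction maps by taking entrywise square roots of the attention coefficients. For every undirected edge \(e=\{i,j\}\) and both incident nodes, set
\[
    F_{i\to e}
    =
    F_{j\to e}
    =
    S_e
    :=
    \operatorname{Diag}\left(
        \sqrt{\alpha_{i\leftarrow j}^{(1)}},
        \ldots,
        \sqrt{\alpha_{i\leftarrow j}^{(H)}}
    \right)
    \in\mathbb R^{H\times H}.
\]
Two properties of symmetric attention make this legitimate. First, \(\alpha_{i\leftarrow j}^{(h)}\geq 0\), so the square roots are real. Second, \(\alpha_{i\leftarrow j}^{(h)}=\alpha_{j\leftarrow i}^{(h)}\), so \(S_e\) depends only on the unordered edge \(e\), not on the orientation used to read off the coefficients. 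As a result, assigning the same matrix to both incidences \(i\to e\) and \(j\to e\) is well defined, and each node–edge incidence gets exactly one restriction map, as a cellular sheaf requires.

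Next, compute the induced transport in both directions. Since \(S_e\) is diagonal, it is symmetric, and diagonal matrices multiply entrywise. Hence
\[
    T_{i\leftarrow j}
    =
    F_{i\to e}^{\top}F_{j\to e}
    =
    S_e^{\top}S_e
    =
    S_e^{2}
    =
    D_{i\leftarrow j}.
\]
The same computation with the roles of \(i\) and \(j\) exchanged gives \(T_{j\leftarrow i}=S_e^2=D_{j\leftarrow i}\), and this equals \(D_{i\leftarrow j}\) by symmetry. This is consistent with the transpose-reciprocity \(T_{j\leftarrow i}=T_{i\leftarrow j}^{\top}\) that any sheaf transport must satisfy. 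The assignment is made independently on each edge, so the identity holds for every edge of the graph.

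There is no real obstacle here. The only point that needs care is well-definedness: a single restriction map per incidence has to reproduce the attention in both directions. That is exactly why the hypotheses of nonnegativity and symmetry are needed, since without symmetry \(S_e\) would not be determined by \(e\) alone. If the graph has self-loops or multi-edges, the same construction applies to each edge separately.
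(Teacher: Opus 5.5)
Your proposal is correct and matches the paper's proof. Both set $F_{i\to e}=F_{j\to e}=D_{i\leftarrow j}^{1/2}$, which is real because the coefficients are nonnegative, use diagonality to get $F_{i\to e}^{\top}F_{j\to e}=D_{i\leftarrow j}$, and use symmetry to handle the reverse direction. Your explicit remark that symmetry is what makes a single restriction map per incidence well defined is a helpful clarification that the paper leaves implicit.
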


\begin{proof}
Since \(D_{i\leftarrow j}\) is diagonal with nonnegative entries, its
principal square root is well defined. For each edge \(e=\{i,j\}\), choose
\[
    F_{i\to e}
    =
    F_{j\to e}
    =
    D_{i\leftarrow j}^{1/2}.
\]
Then
\[
    F_{i\to e}^{\top}F_{j\to e}
    =
    \left(D_{i\leftarrow j}^{1/2}\right)^{\top}
    D_{i\leftarrow j}^{1/2}
    =
    D_{i\leftarrow j},
\]
where the final equality follows because
\(D_{i\leftarrow j}^{1/2}\) is diagonal.
The same construction gives
\(T_{j\leftarrow i}=D_{j\leftarrow i}=D_{i\leftarrow j}\)
in the reverse direction.
\end{proof}

Thus, symmetric multi-head attention induces a diagonal head-space transport
that admits an exact realization within the cellular-sheaf framework.

\color{black}

\subsection{Proof of \Cref{prop:attention-diagonal}}
\label{app:proof-diagonal-attention}

\attentiondiagonal*

\begin{proof}
Once the attention coefficients for the interaction \(j\to i\) have been
computed, they define the diagonal matrix
\[
    D_{i\leftarrow j}
    =
    \operatorname{Diag}
    \left(
        \alpha_{i\leftarrow j}^{(1)},
        \ldots,
        \alpha_{i\leftarrow j}^{(H)}
    \right).
\]
The sender values are arranged as
\[
    V_j
    =
    \begin{bmatrix}
        (V_j^{(1)})^\top\\
        \vdots\\
        (V_j^{(H)})^\top
    \end{bmatrix},
\]
so multiplying by \(D_{i\leftarrow j}\) simply scales each head by its own
attention coefficient:
\[
    D_{i\leftarrow j}V_j
    =
    \begin{bmatrix}
        \alpha_{i\leftarrow j}^{(1)}(V_j^{(1)})^\top\\
        \vdots\\
        \alpha_{i\leftarrow j}^{(H)}(V_j^{(H)})^\top
    \end{bmatrix}.
\]
Therefore, for any receiver head \(r\),
\[
    \left(D_{i\leftarrow j}V_j\right)^{(r)}
    =
    \alpha_{i\leftarrow j}^{(r)}V_j^{(r)}.
\]
Because the off-diagonal entries of \(D_{i\leftarrow j}\) are zero, a value
from source head \(s\neq r\) cannot contribute to receiver head \(r\).
The edge map induced by standard multi-head attention is therefore diagonal
in head space.
\end{proof}

\subsection{Shared Post-Aggregation Equivalence}
\label{app:postaggregation-proof}

This section proves \Cref{thm:postaggregation-equivalence} and gives a
two-edge example illustrating why edge-specific routing before aggregation
cannot, in general, be replaced by a single linear map applied afterwards.

\subsubsection{Proof of \Cref{thm:postaggregation-equivalence}}

Recall that
\(\overline D_{i\leftarrow j}=D_{i\leftarrow j}\otimes I_C\) and
\(\overline T_{i\leftarrow j}=T_{i\leftarrow j}\otimes I_C\).

\postaggregationequivalence*

\begin{proof}
Fix a receiver \(i\), the corresponding edge operators, and write
\(\mathcal N_i=\mathcal N_{\mathrm{att}}(i)\).

Suppose first that a linear map \(A\) satisfies
\[
    \sum_{j\in\mathcal N_i}
    \overline T_{i\leftarrow j}v_j
    =
    A
    \sum_{j\in\mathcal N_i}
    \overline D_{i\leftarrow j}v_j
\]
for every choice of the sender values. For any
\(j\in\mathcal N_i\), set all sender values except \(v_j\) to zero. Since
the edge operators are fixed, this gives
\[
    \overline T_{i\leftarrow j}v_j
    =
    A\overline D_{i\leftarrow j}v_j
\]
for every \(v_j\), which implies
\[
    \overline T_{i\leftarrow j}
    =
    A\overline D_{i\leftarrow j}.
\]
Since \(j\) was arbitrary, this holds for every incoming edge.

Conversely, if
\[
    \overline T_{i\leftarrow j}
    =
    A\overline D_{i\leftarrow j}
    \qquad
    \text{for every }j\in\mathcal N_i,
\]
then, for any collection of sender values,
\[
    \sum_{j\in\mathcal N_i}
    \overline T_{i\leftarrow j}v_j
    =
    \sum_{j\in\mathcal N_i}
    A\overline D_{i\leftarrow j}v_j
    =
    A
    \sum_{j\in\mathcal N_i}
    \overline D_{i\leftarrow j}v_j.
\]
This proves both directions.
\end{proof}

The theorem shows that, conditional on the edge operators induced for a
receiver, edge-specific routing can be moved after aggregation only if the
same linear map \(A\) reproduces the routed transformation of every incoming
edge. We allow \(A\) to be an arbitrary linear map for the receiver \(i\),
making this comparison more permissive than the usual output projection of
the attention backbone.

For the \model{} transport in \Cref{eq:topa-transport}, if
\(D_{i\leftarrow j}\) is invertible, the condition becomes
\[
    A
    =
    \left[
        D_{i\leftarrow j}
        \left(I_H+\lambda\Omega_{i\leftarrow j}\right)
        D_{i\leftarrow j}^{-1}
    \right]
    \otimes I_C
\]
for every incoming sender \(j\). A single post-aggregation map therefore
exists only if all incoming edges induce the same effective transformation.
Since \(\Omega_{i\leftarrow j}\) may vary with the directed interaction,
\model{} does not enforce this condition, and it need not hold in general.

\paragraph{Remark.}
\Cref{thm:postaggregation-equivalence} itself does not require
\(D_{i\leftarrow j}\) to be invertible. Invertibility is used only to express
the condition explicitly as \Cref{eq:topa-common-factor}. For
ordinary unmasked softmax attention with finite logits, all attention
coefficients are positive, so \(D_{i\leftarrow j}\) is invertible for every
attended interaction.

\subsubsection{A two-edge example}
\label{app:postaggregation-example}

A simple example makes the distinction concrete. Consider a receiver \(i\)
with two incoming senders \(j_1\) and \(j_2\), two attention heads
(\(H=2\)), and one value coordinate per head (\(C=1\)). Suppose both heads
assign equal attention to the two incoming edges, so that
\[
    D_{i\leftarrow j_1}
    =
    D_{i\leftarrow j_2}
    =
    \frac{1}{2}I_2.
\]
Set \(\lambda=1\), choose some \(\rho\in(0,1)\), and let the two edges use
different routing matrices,
\[
    \Omega_{i\leftarrow j_1}
    =
    \begin{bmatrix}
        0 & \rho\\
        0 & 0
    \end{bmatrix},
    \qquad
    \Omega_{i\leftarrow j_2}
    =
    \begin{bmatrix}
        0 & 0\\
        \rho & 0
    \end{bmatrix}.
\]
The corresponding \model{} transports are
\[
    T_{i\leftarrow j_1}
    =
    \frac{1}{2}
    \begin{bmatrix}
        1 & \rho\\
        0 & 1
    \end{bmatrix},
    \qquad
    T_{i\leftarrow j_2}
    =
    \frac{1}{2}
    \begin{bmatrix}
        1 & 0\\
        \rho & 1
    \end{bmatrix}.
\]

If a single linear map \(A\) applied after aggregation could reproduce both
edge-specific transports, \Cref{thm:postaggregation-equivalence}
would require
\[
    A
    =
    T_{i\leftarrow j_1}D_{i\leftarrow j_1}^{-1}
    =
    I_2+\Omega_{i\leftarrow j_1}
\]
and also
\[
    A
    =
    T_{i\leftarrow j_2}D_{i\leftarrow j_2}^{-1}
    =
    I_2+\Omega_{i\leftarrow j_2}.
\]
These two requirements are incompatible because
\(\Omega_{i\leftarrow j_1}\neq\Omega_{i\leftarrow j_2}\).

The same point can be seen directly. Let \(v=(1,0)^\top\) and compare the two
collections of incoming values
\[
    (v_{j_1},v_{j_2})=(v,0),
    \qquad
    (v_{j_1},v_{j_2})=(0,v).
\]
Standard attention maps both to the same aggregate, \(\frac{1}{2}v\).
Any linear map applied after aggregation therefore receives the same input in
the two cases and must return the same output.

With \model{}, the edge-specific maps are applied before the messages are
summed:
\[
    T_{i\leftarrow j_1}v
    =
    \frac{1}{2}
    \begin{bmatrix}
        1\\
        0
    \end{bmatrix},
    \qquad
    T_{i\leftarrow j_2}v
    =
    \frac{1}{2}
    \begin{bmatrix}
        1\\
        \rho
    \end{bmatrix}.
\]
Thus the two collections, which standard attention collapses to the same
aggregate, produce different outputs under edge-specific pre-aggregation
routing.

\subsection{Backbone-Specific Routing Instantiations}
\label{app:backbone-integration}

The routing construction in \Cref{sec:topa-transport} does not depend on
a particular attention scoring rule. Different backbones provide different
representations of the same interaction, which we use to construct the
edge-dependent routing matrix \(\Omega_e\).

\paragraph{GATv2.}
For GATv2 \citep{brody2022attentive}, let \(z_i\) denote the representation
entering the attention layer. We define the directed pre-attention context
\[
    u_{i\leftarrow j}
    =
    W_{\mathrm{recv}}z_i
    +
    W_{\mathrm{send}}z_j
    +
    W_E\mathbf{e}_{i\leftarrow j}
    +
    W_G\mathbf{g},
\]
where \(\mathbf{e}_{i\leftarrow j}\) and \(\mathbf{g}\) denote edge- and
graph-level features, respectively. We then set
\[
    \xi_{i\leftarrow j}
    =
    \operatorname{LeakyReLU}(u_{i\leftarrow j}),
\]
and use \(\xi_{i\leftarrow j}\) as the interaction representation in
\Cref{eq:topa-route-generator}. The resulting descriptor is mapped
jointly to the full \(H\times H\) routing matrix.

\paragraph{Graph Transformer.}
For the graph Transformer \citep{graphtransformer}, we first form the
edge-conditioned sender key
\[
    K_{i\leftarrow j}
    =
    K_j+E_{i\leftarrow j}+G,
\]
where \(K_j\) is the sender key and \(E_{i\leftarrow j}\) and \(G\) denote
edge- and graph-level embeddings in the same key space. Combining this with
the receiver query \(Q_i\), we define
\[
    \xi_{i\leftarrow j}
    =
    \operatorname{LeakyReLU}
    \left(
        \operatorname{vec}(Q_i+K_{i\leftarrow j})
    \right),
\]
which is supplied to the routing generator in
\Cref{eq:topa-route-generator}.

\subsection{Initialization and Computational Cost}
\label{app:initialization-complexity}

\paragraph{Initialization.}
For the routing generator, we use
\[
    g_\theta(\xi_e)
    =
    \operatorname{reshape}_{H\times H}
    \left(
        W_\Omega\xi_e+b_\Omega
    \right),
\]
with \(g_\theta:\mathbb R^p\to\mathbb R^{H\times H}\).
We initialize \(W_\Omega=0\) and \(b_\Omega=0\), so that
\(g_\theta(\xi_e)=0\) and therefore \(\Omega_e=0\) for every interaction.
\Cref{eq:topa-transport} then reduces to
\[
    m_e^{\model{}} = D_eV_j,
\]
which is exactly the message produced by the corresponding attention
baseline. With matched backbone parameters, \model{} therefore implements
the same function as the baseline at initialization.

This zero initialization does not prevent the routing generator from
learning. For \(\lambda>0\), \(\tanh'(0)=1\), so the parameters
\(W_\Omega\) and \(b_\Omega\) can receive first-order gradients at
initialization.

\paragraph{Computational cost.}
The additional cost comes from generating and applying a dense
\(H\times H\) routing matrix in head space. For the routing generator, an
interaction representation \(\xi_e\in\mathbb R^p\) requires \(O(pH^2)\)
operations to produce \(\Omega_e\), while applying the resulting matrix to
\(V_j\in\mathbb R^{H\times C}\) costs \(O(H^2C)\) per interaction.
Across the graph, the additional computation is therefore \(O\!\left(|\mathcal E|H^2(p+C)\right)\).
If the routing matrices are materialized explicitly, they require
\(O(|\mathcal E|H^2)\) additional memory.
\section{Additional Experimental Details, Results, and Analysis}
\label{app:additional_details}

This section complements the experimental evaluation in
\autoref{sec:exps} with additional dataset and implementation details,
hyperparameter configurations, extended results, and further analyses for
each experimental setting. All experiments were run on NVIDIA A100 SXM4
GPUs with 80~GB of memory. The code will be released upon acceptance.

\subsection{STaR Benchmark: Dataset, Task, and Model Configuration}
\label{app:star_description}

STaR \citep{khalid2025systematic} is a relational reasoning benchmark that
provides a controlled setting in which edge-dependent transformations are
themselves part of the computation. Each instance consists of an edge-labelled graph together with a query pair of nodes, and the task is to infer the relation holding between them. STaR contains two relational domains, each comprising 57{,}600 training instances and 153{,}600 test instances: RCC-8, with eight qualitative spatial relations, and Allen's Interval Algebra (IA), with thirteen temporal relations. In both domains, the relation carried by an edge determines not only whether information is relevant, but also how the current relational state should be transformed before propagation. Unlike conventional single-path reasoning benchmarks, STaR requires both composing relations along individual paths and integrating the partial, potentially disjunctive information provided
by multiple paths. Problem difficulty is controlled by the path length \(k\) and the number of paths \(b\) connecting the queried nodes. Models are trained on graphs with \(k\in\{2,3,4\}\) and \(b\in\{1,2,3\}\), and evaluated at \(k\in\{2,3,\ldots,15\}\) while keeping \(b\in\{1,2,3\}\). This allows us to evaluate systematic generalization to reasoning depths beyond those observed during training while keeping the number of paths within the training range.

Relation labels are supplied as edge information to both the attention backbone
and the \model{} routing mechanism, so that the comparison isolates the
additional capacity provided by cross-head transport rather than access to
relational information. Within each backbone family, the baseline and
\model{} variant share the same input representation, backbone architecture,
prediction head, optimization procedure, and tuning budget. As throughout our experiments, the routing branch is zero-initialized, so that each \model{} model starts from the same function as its corresponding attention baseline. Performance is measured by query-relation classification accuracy across the different \((k,b)\) configurations. We otherwise follow the experimental setting of \citet{khalid2025systematic}.



\textbf{Hyperparameters and training protocol.}
\label{sec:star-hyperparameters}
All STaR experiments use a fixed hyperparameter configuration.
Both GATv2 and the GT use a hidden dimension of 64,
four attention heads, and 15 recurrent message-passing rounds with
shared processor weights. The Top-A variant has a hidden
dimension of 64 and a transport scale of 1.
Training uses AdamW with learning rate $10^{-3}$, weight decay
$10^{-5}$, and batch size 128. Attention dropout is disabled,
while residual dropout is set to $0.1$.
Models train for at most 40 epochs, with early stopping after
eight epochs without improvement in validation accuracy;
the checkpoint with the highest validation accuracy is retained.

For both RCC-8 and Interval Algebra, the released training
mixture covers $b\in\{1,2,3\}$ and $k\in\{2,3,4\}$.
An 80/20 training--validation split is stratified jointly by
target relation and $(b,k)$ configuration.
Each experiment uses three training seeds.
Within each architecture, the baseline and Top-A variants share
the training configuration, initial shared parameters, minibatch
order, and dropout randomness. The baseline freezes the
off-diagonal controller at zero, whereas Top-A trains it from
the same zero initialization.

\subsection{MovieLens-100K: Experimental Setup and Analysis}
\label{app:movielens}

\noindent\textbf{Dataset, task, and model configuration.}
MovieLens-100K \citep{harper2015movielens} provides a real-world
heterogeneous setting in which information is exchanged between two
semantically distinct node types. We represent the dataset as a bipartite
graph containing 943 users, described by 24-dimensional demographic
features, and 1,682 movies, described by 18-dimensional genre features.
Each observed rating induces two directed interactions, \emph{rates}
(\(u\to m\)) and \emph{rated-by} (\(m\to u\)), which exchange the roles of
sender and receiver while connecting the same user--movie pair. We formulate
the task as binary user--movie link prediction. We use the official 80k/20k rating split for training and test positive
edges, respectively, and hold out 10\% of the training ratings as a fixed
validation set. Validation edges are removed from the message-passing graph.
Negative examples are sampled uniformly from unobserved user--movie pairs
using a 1:1 positive-to-negative ratio, with validation and test negatives
fixed across runs. Type-specific input encoders map user and movie attributes to a shared 64-dimensional
representation, to which we add a learned node-type embedding. Relation type
is represented by a one-hot edge feature and is available to both the
attention mechanism and the \model{} routing generator. We evaluate GATv2
and a sparse GT, together with their corresponding
\model{} variants. All architectures use \(H=4\) attention heads, two
message-passing layers, and a dot-product link-prediction decoder. For each backbone family, the baseline and \model{} variant share the same
input encoders, backbone architecture, decoder, and training protocol.
Hyperparameters are selected on validation performance using tuning seeds
that are disjoint from the final evaluation seeds. The learning-rate search
is shared across all models, while the \model{} variants additionally select
the transport scale \(\lambda\), as detailed below. Final results are evaluated over ten seeds that are disjoint from those used for hyperparameter selection, and test AUROC and AUPR are measured at the best-validation epoch. For each evaluation seed, the baseline and \model{} variant use the same initialization for all shared parameters, while the routing generator is zero-initialized. Consequently, the paired models implement the same function before optimization.

\paragraph{Hyperparameter selection and implementation details.}
We select the hyperparameters independently for each model using mean validation
AUROC over three tuning seeds ($0$--$2$), which are disjoint from the ten
evaluation seeds ($42$--$51$). The learning-rate grid is shared across all four models, while the route scale \(\lambda\) applies only to the \model{} variants.
All remaining architectural and optimization choices are fixed within each
backbone family and are reported in Table~\ref{tab:movielens-hpo-space}.
The configurations selected by validation performance are reported in
Table~\ref{tab:movielens-best-configs}. 
\begin{table}[h]
\centering
\small
\caption{MovieLens-100K hyperparameter search space, shared by GATv2,
GT, and their respective \model{} variants. }
\label{tab:movielens-hpo-space}
\begin{tabular}{ll}
\toprule
Hyperparameter & Search space \\
\midrule
Learning rate
    & $\{5\times10^{-4},\,10^{-3},\,2\times10^{-3}\}$ \\
Route scale $\lambda$ (\model{} only)
    & $\{0.5,\,1\}$ \\
\midrule
Hidden size
    & $64$ ($4$ heads $\times$ $16$ channels) \\
Attention heads
    & $4$ \\
Layers
    & $2$ \\
Dropout probability (input / feature / attention)
    & $0.2$ \\
Weight decay
    & $10^{-5}$ \\
Optimiser
    & Adam \\
Max.\ epochs / early-stopping patience
    & $1000$ / $100$ \\
Negative sampling
    & $1{:}1$ uniform, resampled every epoch \\
\bottomrule
\end{tabular}
\end{table}

\begin{table}[h]
\centering
\small
\caption{Selected configurations on MovieLens-100K. Each is the point of the grid in
Table~\ref{tab:movielens-hpo-space} with the highest mean validation
AUROC over the three tuning seeds.}
\label{tab:movielens-best-configs}
\begin{tabular}{llclrr}
\toprule
Model & Variant & Learning rate & $\lambda$ & Val.\ AUROC & Params \\
\midrule
GATv2 & --      & $10^{-3}$         & --    & $91.06$ & $28{,}416$ \\
      & \model{} & $2\times10^{-3}$ & $1$   & $92.46$ & $30{,}496$ \\
\midrule
GT    & --      & $2\times10^{-3}$  & --    & $79.34$ & $44{,}928$ \\
      & \model{} & $10^{-3}$        & $0.5$ & $92.51$ & $47{,}008$ \\
\bottomrule
\end{tabular}
\end{table}

\begin{figure}[t]
\centering
\begin{subfigure}[t]{0.24\linewidth}
  \centering
  \includegraphics[width=\linewidth]{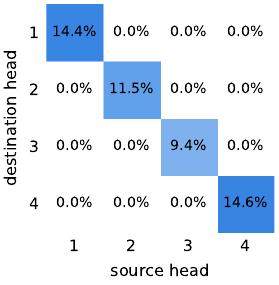}
  \caption{GATv2}\label{fig:movielens-asym-gatv2}
\end{subfigure}\hfill
\begin{subfigure}[t]{0.2\linewidth}
  \centering
  \includegraphics[width=\linewidth]{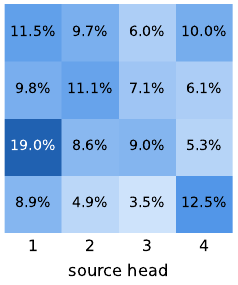}
  \caption{GATv2 + \model{}}\label{fig:movielens-asym-gatv2-topa}
\end{subfigure}\hfill
\begin{subfigure}[t]{0.2\linewidth}
  \centering
  \includegraphics[width=\linewidth]{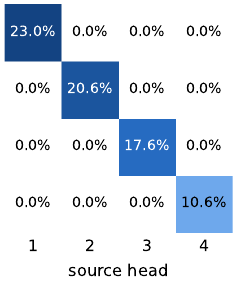}
  \caption{GT}\label{fig:movielens-asym-gt}
\end{subfigure}\hfill
\begin{subfigure}[t]{0.2\linewidth}
  \centering
  \includegraphics[width=\linewidth]{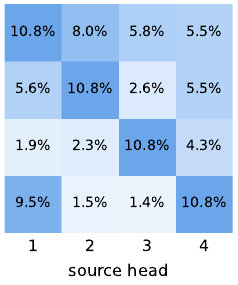}
  \caption{GT + \model{}}\label{fig:movielens-asym-gt-topa}
\end{subfigure}\hfill
\begin{minipage}[t]{0.075\linewidth}
  \raisebox{4mm}{\includegraphics[width=\linewidth]{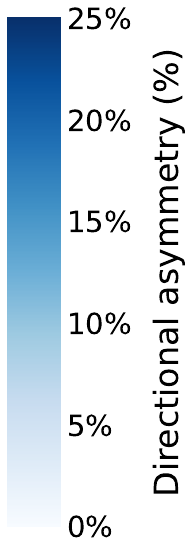}}
\end{minipage}
\caption{\emph{Relative Directional Asymmetry of the Mean Head-to-Head
Transport.} Entrywise normalized asymmetry \(N_{rs}\) from
\Cref{eq:movielens-directional-asymmetry}, multiplied by \(100\), between the mean effective transport for \emph{rates} (\(u\!\to\!m\)) and
\emph{rated-by} (\(m\!\to\!u\)). Rows index receiver heads and columns source heads. Standard GATv2 and GT have zero off-diagonal entries because vanilla attention does not permit cross-head transport. With \model{}, nonzero off-diagonal asymmetry indicates that the effective cross-head contributions differ between the two propagation directions.}
\label{fig:movielens-asym}
\end{figure}

\noindent\textbf{Directional asymmetry of the mean head-to-head transport.}
The paired relations \emph{rates} and \emph{rated-by} provide a natural
probe of whether the learned transport depends on interaction direction.
For every observed rating \((u,m)\in\mathcal R\), the graph contains the
directed interactions \(u\to m\) and \(m\to u\). These edges connect the
same pair of nodes but exchange their sender and receiver roles. For \model{}, the effective head-space transport associated with an
interaction \(e\) is:
\begin{equation}
    T_e
    =
    D_e\left(I_H+\lambda\Omega_e\right)
    \label{eq:movielens-effective-transport}
\end{equation}
while the corresponding baseline is recovered by setting
\(\Omega_e=0\), giving \(T_e=D_e\).
We average the effective transport separately over the two directions:
\begin{equation}
    \overline{T}_{\mathrm{rates}}
    =
    \frac{1}{|\mathcal R|}
    \sum_{(u,m)\in\mathcal R}
    T_{u\to m},
    \qquad
    \overline{T}_{\mathrm{rated\text{-}by}}
    =
    \frac{1}{|\mathcal R|}
    \sum_{(u,m)\in\mathcal R}
    T_{m\to u}
    \label{eq:movielens-mean-transport}
\end{equation}
and compare their entries through the normalized directional asymmetry:
\begin{equation}
    N_{rs}
    =
    \frac{
        \left|
        \overline{T}_{\mathrm{rates},rs}
        -
        \overline{T}_{\mathrm{rated\text{-}by},rs}
        \right|
    }{
        \left|
        \overline{T}_{\mathrm{rates},rs}
        \right|
        +
        \left|
        \overline{T}_{\mathrm{rated\text{-}by},rs}
        \right|
    }
    \in[0,1]
    \label{eq:movielens-directional-asymmetry}
\end{equation}
with \(N_{rs}:=0\) when both entries vanish. Here, \(r\) indexes the
receiver head and \(s\) the source head. The normalization makes the
comparison independent of a common rescaling of the two transports:
\(N_{rs}=0\) when the corresponding route is identical in both directions,
whereas \(N_{rs}=1\) when it is used in only one direction or the two
directional entries have opposite signs. 
\Cref{fig:movielens-asym} reports this quantity for the two backbones and
their \model{} variants. Standard attention contains no off-diagonal
transport, so directional differences are confined to the diagonal attention
paths. With \model{}, the effective transport also contains off-diagonal
contributions, whose directional asymmetry shows that cross-head information
transfer differs between \emph{rates} and \emph{rated-by}. The effective
head-space transport is therefore not identical across the two directed
relations.

\paragraph{Dependence of effective transport on movie content.}
We next examine whether the edge-specific effective transport varies only
with relation direction or also across individual interactions. For each
edge, we vectorize the \(H(H-1)=12\) off-diagonal entries of its effective
transport and project these vectors onto their first two principal
components. We then color the resulting edge-level representations according
to the genre of the movie endpoint. As shown in \Cref{fig:movielens-genre}, the effective cross-head transports
exhibit structure associated with movie genre. The organization is most
pronounced for \emph{rated-by} edges, where
Drama/Mystery/Thriller, Action/Adventure/Sci-Fi, and
Children's/Animation movies occupy different regions of the projection,
while a weaker structure is visible for \emph{rates}. Since genre enters
through the movie features rather than the link-prediction target, these
results show that the effective cross-head transport varies with node content and not only with the discrete relation direction. Together with the
directional analysis above, this shows that the transport realized by
\model{} varies across both relation direction and individual interactions.

\begin{figure}[t]
\centering
\begin{subfigure}[t]{0.478\linewidth}
  \centering
  \includegraphics[width=\linewidth]{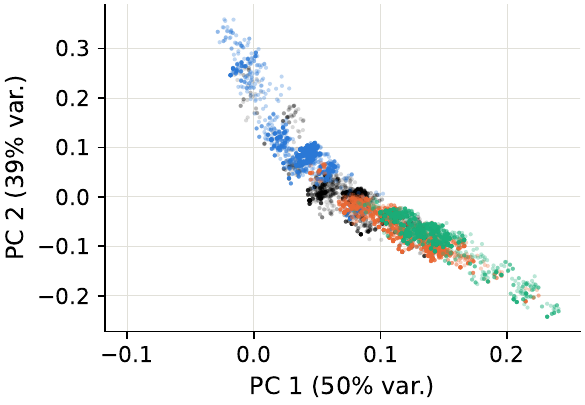}
  \caption{\emph{rated-by} (movie $\rightarrow$ user)}\label{fig:movielens-genre-rated-by}
\end{subfigure}\hfill
\begin{subfigure}[t]{0.402\linewidth}
  \centering
  \includegraphics[width=\linewidth]{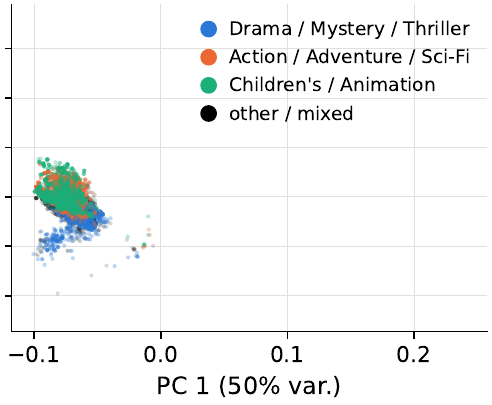}
  \caption{\emph{rates} (user $\rightarrow$ movie)}\label{fig:movielens-genre-rates}
\end{subfigure}
\caption{\emph{Per-edge Transport Reflects Movie Content.} PCA of the twelve off-diagonal transport coefficients for individual edges, colored by the genre of the movie endpoint. The learned transports separate according to genre, particularly for \emph{rated-by} edges, where distinct semantic groups occupy different regions of the projection. The effect is weaker but still visible for \emph{rates} edges. }
\label{fig:movielens-genre}
\end{figure}

\subsection{CLRS-30}\label{app:clrs}

\noindent\textbf{Setup.}
We evaluate on the CLRS Algorithmic Reasoning Benchmark
\citep{velickovic2022clrs}, which contains execution trajectories
from 30 classical algorithms drawn from the \emph{Introduction
to Algorithms} textbook \citep{cormen}. The benchmark spans a
broad range of computational procedures, including sorting,
searching, divide-and-conquer, greedy algorithms, dynamic
programming, graph algorithms, string matching, and computational
geometry. This diversity makes CLRS particularly suitable for
studying architectural inductive biases, since different tasks
require substantially different forms of computation, such as
repeated state updates, recursive reasoning, dynamic programming,
and structured information propagation.

Each algorithm is represented through a graph-oriented encoding
in which the objects manipulated by the algorithm are treated
as nodes and their relations as edges. Depending on the task,
features may be associated with nodes, edges, or the entire
graph. Each execution trajectory contains the algorithm input,
its final output, and a sequence of intermediate states,
referred to as \emph{hints}, which expose the evolution of the
underlying computation. These hints provide intermediate
supervision during training and are designed so that successive
algorithmic states can generally be related through local
message-passing operations.

We use the canonical \texttt{CLRS30\_v1.0.0} dataset and its predefined
training, validation, and test splits. Each algorithm has
1{,}000 training trajectories at nominal problem size $n=16$.
Validation uses the same nominal size, while testing uses
$n=64$ to assess out-of-distribution generalization, subject
to the benchmark's task-specific size conventions. Validation
and test splits contain 32 trajectories for most tasks, with
the benchmark's prescribed multipliers for tasks with fewer
prediction targets: 1{,}024 trajectories for maximum subarray
(Kadane), and 2{,}048 for binary search, minimum, quickselect,
both string-matching algorithms, and segment intersection.

We train a separate model for each algorithm, encoding and
decoding hints and supervising intermediate predictions.
Each final run uses 10{,}000 training steps, and the checkpoint
with the highest validation score is restored for test
evaluation. For optimal BST, we omit the auxiliary
\texttt{root\_h} hint loss in all variants.

\medskip
\noindent\textbf{Hyperparameter selection.}
We adapt the eight-core-task, reduced-budget tuning strategy
of \citet{diao2023relational}, using Optuna TPE and additional
multi-seed confirmation stages while retaining the canonical
CLRS-30 dataset splits. The core tasks are activity selection,
Bellman--Ford, binary search, maximum subarray (Kadane), Graham
scan, insertion sort, matrix-chain order, and naive string
matching.

The procedure is run independently for GATv2, GT, and their
respective \model{} variants, using identical search spaces,
training-step budgets, trial-seed schedules, and selection
criteria. \Cref{tab:clrs-hpo-space} lists the search space.
For each variant, we first evaluate 20 configurations on the
eight core tasks, using one seed per configuration and
2{,}000 training steps per task. The three highest-ranked
configurations are then evaluated on three confirmation
seeds at the same budget. The best two proceed to confirmation
at 10{,}000 steps, using the same three seeds.

At each stage, configurations are ranked by the mean
best-validation score across the core tasks and applicable
seeds. The winning configuration is used for all 30 algorithms,
with 20 paired final-evaluation seeds disjoint from the search
and confirmation seeds. All selected configurations use hidden
size 256, eight attention heads, one message-passing step per
hint transition, batch size 32, and dropout probability 0.1.
These comparisons therefore assess separately tuned systems.


\begin{table*}[t]
\centering
\small
\setlength{\tabcolsep}{4pt}
\caption{CLRS-30 hyperparameter search space and final selected configurations
for GATv2, GT, and their respective \model{} variants. Each variant is tuned
independently using 20 Optuna TPE trials on eight core algorithms, followed by
multi-seed and full-budget confirmation. Final configurations are selected using
validation performance averaged over the eight core tasks and three confirmation
seeds, then frozen for final evaluation. Learning rate is searched on a logarithmic
scale; all other parameters use categorical choices. Selected learning rates are
rounded to four significant figures. Message-passing steps are per hint transition.}
\label{tab:clrs-hpo-space}
\resizebox{\linewidth}{!}{%
\begin{tabular}{llrrrr}
\toprule
& & \multicolumn{4}{c}{Final selected configuration} \\
\cmidrule(lr){3-6}
Hyperparameter & Search space & GATv2 & GATv2 + \model{} & GT & GT + \model{} \\
\midrule
Learning rate & $[10^{-4},\,3\times10^{-3}]$ (log scale) & $4.727\times10^{-4}$ & $4.563\times10^{-4}$ & $7.348\times10^{-4}$ & $5.416\times10^{-4}$ \\
Hidden size & $\{64,128,256\}$ & 256 & 256 & 256 & 256 \\
Attention heads & $\{2,4,8\}$ & 8 & 8 & 8 & 8 \\
Message-passing steps & $\{1,2\}$ & 1 & 1 & 1 & 1 \\
Batch size & $\{16,32\}$ & 32 & 32 & 32 & 32 \\
Weight decay & $\{0,10^{-4},10^{-3}\}$ & $10^{-3}$ & $10^{-4}$ & $10^{-4}$ & $10^{-4}$ \\
Dropout probability & $\{0,0.1,0.2\}$ & 0.1 & 0.1 & 0.1 & 0.1 \\
\bottomrule
\end{tabular}%
}
\end{table*}

\medskip
\noindent\textbf{Additional results.}
\Cref{tab:clrs-tuned} reports per-task CLRS-30 test performance
for GATv2, GATv2 + \model{}, GT, and GT + \model{}, with all
30 tasks evaluated across 20 paired seeds. Entries report
the mean and sample standard deviation, the paired mean
improvement in percentage points, and the number of seeds
on which adding \model{} strictly improves the score.
The macro-average weights all 30 tasks equally; its standard
deviation and win count are computed from seed-level task
averages.

GATv2 + \model{} increases the macro-average from 53.22\% to
59.26\% (+6.04 percentage points), while GT + \model{} increases
it from 49.81\% to 58.36\% (+8.56 percentage points). For both
architectures, the task-averaged score improves in all 20 seeds.

Both architectures show particularly pronounced gains on binary
search, insertion sort, bubble sort, and quicksort. GT also
benefits substantially on shortest-path tasks, including Dijkstra
(+21.50 percentage points), Bellman--Ford (+13.77), Floyd--Warshall
(+12.23), and DAG shortest paths (+10.90). Improvements are not
uniform: heapsort deteriorates for both architectures, and optimal
BST exhibits the largest mean decline for GT (-15.93 percentage
points).

\providecommand{\model}{\textsc{OffDiag}}
\begin{table*}[h]
\centering
\footnotesize
\setlength{\tabcolsep}{3pt}
\caption{CLRS-30 test performance (official CLRS score) after separate hyperparameter optimization for each architecture and variant. Entries are mean $\pm$ sample standard deviation across 20 paired seeds, in percent (higher is better). $\Delta$ is the improvement from adding \model{} in percentage points, computed before rounding. Wins counts seeds with strictly higher scores with \model{} than without it (ties are not wins). Bold indicates the higher mean within each architecture, with ties at displayed precision unbolded; it does not denote statistical significance. All evaluations are complete. The macro-average gives equal weight to all 30 tasks; its standard deviation is computed across seed-level task averages, and its wins compare those averages across the 20 seeds. Optimal BST uses the auxiliary root-hint loss workaround described in the experimental setup.}
\label{tab:clrs-tuned}
\resizebox{\linewidth}{!}{%
\begin{tabular}{lrrrcrrrc}
\toprule
& \multicolumn{4}{c}{GATv2} & \multicolumn{4}{c}{GT} \\
\cmidrule(lr){2-5} \cmidrule(lr){6-9}
Task & GATv2 & GATv2 + \model{} & $\Delta$ & Wins & GT & GT + \model{} & $\Delta$ & Wins \\
\midrule
Activity selector & $\mathbf{67.88} \pm 4.44$ & $65.29 \pm 5.16$ & $-2.59$ & 7/20 & $\mathbf{69.19} \pm 4.48$ & $66.63 \pm 5.85$ & $-2.56$ & 6/20 \\
Articulation points & $\mathbf{28.88} \pm 5.21$ & $24.24 \pm 10.35$ & $-4.64$ & 4/20 & $\mathbf{24.56} \pm 11.04$ & $22.27 \pm 12.01$ & $-2.29$ & 10/20 \\
Bellman--Ford & $88.81 \pm 1.18$ & $\mathbf{90.38} \pm 2.36$ & $+1.57$ & 13/20 & $78.23 \pm 1.61$ & $\mathbf{92.00} \pm 1.12$ & $+13.77$ & 20/20 \\
BFS & $\mathbf{99.99} \pm 0.02$ & $99.98 \pm 0.02$ & $-0.00$ & 1/20 & $\mathbf{99.70} \pm 0.75$ & $97.99 \pm 2.07$ & $-1.71$ & 3/20 \\
Binary search & $40.59 \pm 12.98$ & $\mathbf{80.39} \pm 8.00$ & $+39.80$ & 20/20 & $5.60 \pm 0.89$ & $\mathbf{79.91} \pm 8.09$ & $+74.32$ & 20/20 \\
Bridges & $32.91 \pm 5.67$ & $\mathbf{34.46} \pm 7.74$ & $+1.56$ & 12/20 & $29.94 \pm 0.60$ & $\mathbf{30.00} \pm 0.43$ & $+0.06$ & 10/20 \\
Bubble sort & $10.41 \pm 16.92$ & $\mathbf{54.17} \pm 16.83$ & $+43.76$ & 18/20 & $5.98 \pm 5.09$ & $\mathbf{51.53} \pm 23.64$ & $+45.55$ & 20/20 \\
DAG shortest paths & $94.63 \pm 2.10$ & $\mathbf{96.59} \pm 0.89$ & $+1.96$ & 17/20 & $82.14 \pm 4.15$ & $\mathbf{93.03} \pm 4.65$ & $+10.90$ & 19/20 \\
DFS & $21.70 \pm 4.41$ & $\mathbf{27.44} \pm 6.57$ & $+5.74$ & 15/20 & $18.93 \pm 5.80$ & $\mathbf{25.45} \pm 6.34$ & $+6.52$ & 14/20 \\
Dijkstra & $80.20 \pm 5.25$ & $\mathbf{88.49} \pm 2.64$ & $+8.29$ & 20/20 & $69.11 \pm 7.51$ & $\mathbf{90.61} \pm 2.72$ & $+21.50$ & 20/20 \\
Maximum subarray (Kadane) & $58.10 \pm 4.78$ & $\mathbf{63.61} \pm 3.34$ & $+5.51$ & 16/20 & $57.19 \pm 4.66$ & $\mathbf{62.91} \pm 3.73$ & $+5.72$ & 17/20 \\
Floyd--Warshall & $\mathbf{34.68} \pm 2.86$ & $31.73 \pm 4.16$ & $-2.94$ & 5/20 & $27.62 \pm 5.89$ & $\mathbf{39.85} \pm 3.59$ & $+12.23$ & 20/20 \\
Graham scan & $71.40 \pm 5.07$ & $\mathbf{78.61} \pm 6.13$ & $+7.20$ & 17/20 & $66.70 \pm 8.12$ & $\mathbf{72.96} \pm 11.18$ & $+6.26$ & 14/20 \\
Heapsort & $\mathbf{54.47} \pm 25.45$ & $28.83 \pm 14.37$ & $-25.64$ & 4/20 & $\mathbf{45.04} \pm 20.21$ & $33.43 \pm 21.74$ & $-11.61$ & 7/20 \\
Insertion sort & $28.49 \pm 20.03$ & $\mathbf{84.49} \pm 5.55$ & $+56.00$ & 19/20 & $37.24 \pm 21.72$ & $\mathbf{87.33} \pm 3.86$ & $+50.10$ & 19/20 \\
Jarvis march & $58.95 \pm 2.30$ & $\mathbf{60.08} \pm 2.25$ & $+1.13$ & 13/20 & $61.96 \pm 4.88$ & $\mathbf{62.27} \pm 2.32$ & $+0.32$ & 10/20 \\
KMP matcher & $0.00 \pm 0.01$ & $\mathbf{0.05} \pm 0.12$ & $+0.05$ & 4/20 & $0.05 \pm 0.10$ & $\mathbf{0.78} \pm 1.14$ & $+0.73$ & 16/20 \\
LCS length & $84.81 \pm 0.97$ & $\mathbf{85.21} \pm 0.39$ & $+0.40$ & 13/20 & $\mathbf{85.33} \pm 0.63$ & $84.86 \pm 1.42$ & $-0.48$ & 9/20 \\
Matrix chain order & $77.98 \pm 6.92$ & $\mathbf{83.50} \pm 1.58$ & $+5.51$ & 17/20 & $\mathbf{86.48} \pm 5.30$ & $83.92 \pm 1.48$ & $-2.56$ & 2/20 \\
Minimum & $\mathbf{86.36} \pm 4.84$ & $83.39 \pm 8.26$ & $-2.97$ & 8/20 & $\mathbf{90.16} \pm 3.80$ & $88.48 \pm 3.61$ & $-1.67$ & 5/20 \\
MST (Kruskal) & $\mathbf{64.00} \pm 6.38$ & $56.98 \pm 10.83$ & $-7.02$ & 5/20 & $\mathbf{73.12} \pm 4.03$ & $69.57 \pm 4.91$ & $-3.56$ & 6/20 \\
MST (Prim) & $59.70 \pm 7.71$ & $\mathbf{66.28} \pm 7.23$ & $+6.58$ & 13/20 & $52.26 \pm 8.63$ & $\mathbf{60.76} \pm 8.21$ & $+8.51$ & 15/20 \\
Naive string matcher & $2.79 \pm 1.70$ & $\mathbf{3.03} \pm 1.53$ & $+0.25$ & 11/20 & $2.74 \pm 1.60$ & $\mathbf{3.06} \pm 1.22$ & $+0.32$ & 12/20 \\
Optimal BST & $\mathbf{57.83} \pm 9.23$ & $52.42 \pm 17.87$ & $-5.41$ & 7/20 & $\mathbf{57.21} \pm 12.46$ & $41.28 \pm 21.51$ & $-15.93$ & 8/20 \\
Quickselect & $0.00 \pm 0.00$ & $\mathbf{0.40} \pm 1.18$ & $+0.40$ & 5/20 & $0.01 \pm 0.02$ & $\mathbf{0.33} \pm 1.50$ & $+0.33$ & 1/20 \\
Quicksort & $7.36 \pm 5.14$ & $\mathbf{50.20} \pm 14.13$ & $+42.84$ & 20/20 & $6.99 \pm 6.62$ & $\mathbf{44.39} \pm 25.57$ & $+37.40$ & 18/20 \\
Segments intersect & $\mathbf{80.43} \pm 0.67$ & $79.19 \pm 0.79$ & $-1.24$ & 4/20 & $\mathbf{81.47} \pm 0.38$ & $79.53 \pm 0.74$ & $-1.94$ & 0/20 \\
Strongly connected comp. & $32.68 \pm 4.80$ & $\mathbf{34.22} \pm 7.24$ & $+1.54$ & 15/20 & $35.42 \pm 3.53$ & $\mathbf{38.54} \pm 6.56$ & $+3.11$ & 14/20 \\
Task scheduling & $80.57 \pm 1.30$ & $\mathbf{82.55} \pm 1.15$ & $+1.98$ & 17/20 & $80.76 \pm 1.04$ & $\mathbf{81.25} \pm 1.53$ & $+0.49$ & 13/20 \\
Topological sort & $90.03 \pm 12.34$ & $\mathbf{91.55} \pm 7.88$ & $+1.52$ & 11/20 & $63.13 \pm 4.77$ & $\mathbf{66.00} \pm 17.63$ & $+2.87$ & 13/20 \\
\midrule
\textbf{Macro-average (30 tasks)} & $53.22 \pm 1.26$ & $\mathbf{59.26} \pm 1.48$ & $+6.04$ & 20/20 & $49.81 \pm 1.73$ & $\mathbf{58.36} \pm 1.20$ & $+8.56$ & 20/20 \\
\bottomrule
\end{tabular}%
}
\end{table*}

\subsection{Heterophilic Node Classification}
\label{app:platonov}
Heterophily motivated much of the early work on SNNs: when neighboring nodes
tend to carry different labels, transforming a neighboring representation
before aggregation can be more expressive than merely reweighting it
\citep{bodnar2022neural}. Since \model{} implements a matrix-valued transport mechanism within multi-head attention, the benchmarks of
\citet{platonov2023critical} provide a natural contrast setting in which to ask whether cross-head transport provides a consistent advantage on heterophilic graphs. This question is also relevant in light of recent work questioning whether the empirical advantages of sheaf models on heterophilic graphs can be attributed specifically to learned sheaf structure \citep{caralt2026necessity,fiorini2026benchmarkingsheafneuralnetworks}.

\paragraph{Setup.}
We evaluate on the seven benchmarks of \citet{platonov2023critical}, using
the ten official train/validation/test splits provided with each dataset.
The multi-class datasets (\emph{roman-empire}, \emph{amazon-ratings},
\emph{squirrel}, and \emph{chameleon}) are evaluated by accuracy, while the
binary datasets (\emph{minesweeper}, \emph{tolokers}, and \emph{questions})
are evaluated by ROC-AUC. Early stopping and hyperparameter selection use the corresponding validation metric. Within each backbone family, the baseline and its \model{} variant share the same encoder, processor depth, decoder, optimization schedule, data split, and seed. Each \model{} run is initialized from the corresponding same-seed baseline, while the routing generator is zero-initialized, so that the two models implement the same function before training. All models use \(H=4\) attention heads and five message-passing layers, with head width \(64\) for GATv2 and \(56\) for the GT. Parameter counts for all models are reported in \Cref{tab:platonov-configs}. Each arm is given the same hyperparameter search over learning rate \(\in\{10^{-3},3\cdot10^{-4}\}\) and dropout \(\in\{0,0.2,0.5\}\), tuned on split \(0\) using two seeds and selected by validation performance alone. Before training, every \model{} run is verified to match its paired baseline at \(\Omega_e:=0\).

\begin{table*}[t]
    \centering
    \small
    \setlength{\tabcolsep}{5pt}
    \renewcommand{\arraystretch}{1.05}

    \caption{
    Performance comparison on the datasets of
    \citet{platonov2023critical}. AUROC is reported for
    \emph{minesweeper}, \emph{tolokers}, and \emph{questions}, while
    accuracy is reported for the remaining datasets. We additionally report
    GATv2 and GT with and without \model{}.
    }
    \label{tab:platonov_results}

    \resizebox{\textwidth}{!}{%
    \begin{tabular}{lccccccc}
        \toprule
        \textbf{Model}
        & \textbf{roman-empire}
        & \textbf{amazon-ratings}
        & \textbf{minesweeper}
        & \textbf{tolokers}
        & \textbf{questions}
        & \textbf{squirrel}
        & \textbf{chameleon} \\

        \textbf{Edge Homophily}
        & 0.05
        & 0.38
        & 0.68
        & 0.59
        & 0.84
        & 0.20
        & 0.23 \\
        \midrule

        \textit{Baselines}\\
        GCN
        & $73.69 \pm 0.74$
        & $48.70 \pm 0.63$
        & $89.75 \pm 0.52$
        & $83.64 \pm 0.67$
        & $76.09 \pm 1.27$
        & \textcolor{gray}{\textbf{$39.47 \pm 1.47$}}
        & $40.89 \pm 4.12$ \\

        SAGE
        & $85.74 \pm 0.67$
        & $53.63 \pm 0.39$
        & $93.51 \pm 0.57$
        & $82.43 \pm 0.44$
        & $76.44 \pm 0.62$
        & $36.09 \pm 1.99$
        & $37.77 \pm 4.14$ \\

        GAT
        & $80.87 \pm 0.30$
        & $49.09 \pm 0.63$
        & $92.01 \pm 0.68$
        & $83.70 \pm 0.47$
        & $77.43 \pm 1.20$
        & $35.62 \pm 2.06$
        & $39.21 \pm 3.08$ \\

        GAT-sep
        & $88.75 \pm 0.41$
        & $52.70 \pm 0.62$
        & $93.91 \pm 0.35$
        & $83.78 \pm 0.43$
        & $76.79 \pm 0.71$
        & $35.46 \pm 3.10$
        & $39.26 \pm 2.50$ \\

        GT
        & $86.51 \pm 0.73$
        & $51.17 \pm 0.66$
        & $91.85 \pm 0.76$
        & $83.23 \pm 0.64$
        & $77.95 \pm 0.68$
        & $36.30 \pm 1.98$
        & $38.87 \pm 3.66$ \\

        GT-sep
        & $87.32 \pm 0.39$
        & $52.18 \pm 0.80$
        & $92.29 \pm 0.47$
        & $82.52 \pm 0.92$
        & $78.05 \pm 0.93$
        & $36.66 \pm 1.63$
        & $40.31 \pm 3.01$ \\

        CO-GNN
        & $89.44 \pm 0.50$
        & \textbf{$54.20 \pm 0.34$}
        & $97.35 \pm 0.63$
        & \textcolor{gray}{\textbf{$84.84 \pm 0.96$}}
        & $75.97 \pm 0.89$
        & $39.39 \pm 2.76$
        & \textcolor{gray}{\textbf{$41.14 \pm 5.40$}} \\

        \midrule
        \textit{Sheaf Neural Networks}\\

        O(d)-NSD
        & $80.41 \pm 0.72$
        & $42.76 \pm 0.54$
        & $92.15 \pm 0.84$
        & $78.83 \pm 0.76$
        & $69.69 \pm 1.46$
        & $35.79 \pm 3.34$
        & $37.93 \pm 2.24$ \\

        BuNN
        & $91.75 \pm 0.39$
        & $53.74 \pm 0.51$
        & $98.99 \pm 0.16$
        & $84.78 \pm 0.80$
        & $78.75 \pm 1.09$
        & --
        & -- \\

        CSNN
        & $92.63 \pm 0.50$
        & $52.07 \pm 1.00$
        & $99.07 \pm 0.25$
        & $85.45 \pm 0.53$
        & $79.31 \pm 1.22$
        & $41.18 \pm 2.23$
        & $43.09 \pm 3.17$ \\

        \midrule

        \textit{\modelext{} Baselines and Models}\\

        GATv2 
        & $87.64 \pm 0.56$
        & $44.16 \pm 0.52$
        & $94.90 \pm 0.35$
        & $84.82 \pm 0.76$
        & $59.51 \pm 6.51$
        & $37.68 \pm 2.53$
        & $40.00 \pm 3.91$ \\

        \textbf{GATv2 + \model{} (ours)}
        & $87.81 \pm 0.41$
        & $45.23 \pm 1.07$
        & $94.96 \pm 0.48$
        & $85.09 \pm 0.75$
        & $58.59 \pm 7.03$
        & $36.96 \pm 2.12$
        & $40.09 \pm 3.53$ \\

        GT 
        & $88.10 \pm 0.40$
        & $52.75 \pm 0.56$
        & $94.01 \pm 0.32$
        & $77.89 \pm 7.76$
        & $60.08 \pm 3.45$
        & $38.17 \pm 2.27$
        & $41.19 \pm 3.32$ \\

        \textbf{GT + \model{} (ours)}
        & $89.00 \pm 0.53$
        & $52.69 \pm 0.67$
        & $93.97 \pm 0.29$
        & $84.76 \pm 0.67$
        & $60.57 \pm 4.49$
        & $38.39 \pm 2.33$
        & $41.11 \pm 3.83$ \\

        \bottomrule
    \end{tabular}%
    }
\end{table*}

\paragraph{Results.}
As shown in \Cref{tab:platonov_results}, cross-head routing does not provide a systematic advantage on these heterophilic benchmarks. Across the \(14\) dataset--backbone comparisons, most changes are small and vary in sign across datasets and architectures. The clearest exception is the GT on
\emph{tolokers}, where \model{} improves ROC-AUC from \(77.89\) to \(84.76\). A smaller positive effect is also observed for the GT on
\emph{roman-empire} (\(+0.90\) accuracy points), while the remaining
comparisons remain close to their corresponding baselines. Overall, adding
\model{} does not consistently improve performance over the corresponding
attention backbone in this benchmark suite.

\paragraph{What heterophily does and does not imply.}
These results help clarify the role of the transport inductive bias that
motivates \model{}. Heterophily was one of the original settings in which
matrix-valued sheaf transport was argued to be useful, since neighboring
representations may need to be transformed rather than simply reweighted.
By isolating a related matrix-valued transport mechanism within attention,
we can ask whether this additional flexibility is itself sufficient to yield
systematic gains on heterophilic graphs. The results suggest that it is not:
cross-head transport provides no consistent advantage across these benchmarks. Together with the main experiments, this supports a more specific view of the inductive bias, whose usefulness appears most clearly in settings where individual interactions call for different transformations before aggregation.

\paragraph{Hyperparameters and Selected Configurations}
\label{app:platonov-configs}
\Cref{tab:platonov-hparams} reports the complete configuration used for
heterophilic node classification. Only the learning rate and dropout are
searched, using the same six-cell grid for every arm; all remaining settings
are fixed in advance. Hyperparameters are selected according to validation
performance, following the standard protocol for these benchmarks.
\Cref{tab:platonov-configs} reports the selected configuration for each
dataset and model together with its validation score and parameter count.

\begin{table}[t]
\centering
\caption{\emph{Heterophilic Node Classification: Hyperparameters.}
The same hyperparameter grid is assigned to every arm and selected on
validation performance.}
\label{tab:platonov-hparams}
\scalebox{0.7}{%
\begin{tabular}{lcc}
\toprule
\textbf{Hyperparameter} & \textbf{GATv2} & \textbf{GT} \\
\midrule
\multicolumn{3}{l}{\emph{Tuned per arm (identical grid, validation-selected)}} \\
Learning rate & \multicolumn{2}{c}{$10^{-3}$ / $3\cdot 10^{-4}$} \\
Dropout ($=$ input dropout) & \multicolumn{2}{c}{$0$ / $0.2$ / $0.5$} \\
\midrule
\multicolumn{3}{l}{\emph{Architecture}} \\
Attention heads $H$ & $4$ & $4$ \\
Layers & $5$ & $5$ \\
Channels per head & $64$ & $56$ \\
Residual connections & yes & yes \\
LayerNorm in the block & -- & yes \\
Activation & -- & ReLU \\
LeakyReLU negative slope & $0.2$ & -- \\
Attention dropout & $0$ & -- \\
Output $\ell_2$ normalisation & yes & yes \\
Self-loops added & yes & yes \\
Jumping knowledge & no & no \\
\midrule
\multicolumn{3}{l}{\emph{Optimisation}} \\
Optimiser & Adam & Adam \\
Weight decay & $5\cdot 10^{-4}$ / $10^{-3}$ & $5\cdot 10^{-4}$ / $10^{-3}$ \\
Max.\ epochs & $1000$ & $1000$ \\
Early-stopping patience & $200$ & $200$ \\
\midrule
\multicolumn{3}{l}{\emph{\model{} transport}} \\
Route scale $\lambda$ & $1$ & $1$ \\
Route dropout & $0$ & $0$ \\
Routing initialization & zero & zero \\
\bottomrule
\end{tabular}}
\end{table}

\begin{table}[t]
\centering
\caption{\emph{Heterophilic Node Classification: Selected Configurations.}
Learning rate and dropout chosen for each arm by mean validation score from
Table~\ref{tab:platonov-hparams}. A dash in the variant column marks the plain
backbone; \model{} denotes the same backbone augmented with cross-head routing.}
\label{tab:platonov-configs}
\scalebox{0.7}{%
\begin{tabular}{lllccrr}
\toprule
\textbf{Dataset} & \textbf{Model} & \textbf{Variant} & \textbf{Learning rate} & \textbf{Dropout} & \textbf{Val.} & \textbf{Params} \\
\midrule
Roman-empire & GATv2 & -- & $0.001$ & $0.2$ & $88.47$ & $1,069,842$ \\
 & GATv2 & \model{} & $0.001$ & $0.5$ & $88.17$ & $1,080,162$ \\
 & GT & -- & $0.001$ & $0.5$ & $88.59$ & $1,081,714$ \\
 & GT & \model{} & $0.001$ & $0.5$ & $89.55$ & $1,099,714$ \\
\midrule
Amazon-ratings & GATv2 & -- & $0.0003$ & $0.2$ & $44.02$ & $1,066,501$ \\
 & GATv2 & \model{} & $0.0003$ & $0.2$ & $45.12$ & $1,076,821$ \\
 & GT & -- & $0.0003$ & $0.2$ & $52.43$ & $1,078,789$ \\
 & GT & \model{} & $0.0003$ & $0.2$ & $52.29$ & $1,096,789$ \\
\midrule
Squirrel-filt. & GATv2 & -- & $0.001$ & $0$ & $39.35$ & $1,524,485$ \\
 & GATv2 & \model{} & $0.001$ & $0.2$ & $39.14$ & $1,534,805$ \\
 & GT & -- & $0.001$ & $0$ & $39.83$ & $1,479,525$ \\
 & GT & \model{} & $0.001$ & $0.2$ & $40.32$ & $1,497,525$ \\
\midrule
Chameleon-filt. & GATv2 & -- & $0.001$ & $0.5$ & $47.04$ & $1,584,901$ \\
 & GATv2 & \model{} & $0.001$ & $0.5$ & $47.21$ & $1,595,221$ \\
 & GT & -- & $0.0003$ & $0.5$ & $47.04$ & $1,532,389$ \\
 & GT & \model{} & $0.0003$ & $0.5$ & $46.86$ & $1,550,389$ \\
\midrule
Minesweeper & GATv2 & -- & $0.001$ & $0.2$ & $94.02$ & $990,722$ \\
 & GATv2 & \model{} & $0.001$ & $0.2$ & $94.05$ & $1,001,042$ \\
 & GT & -- & $0.0003$ & $0.2$ & $93.26$ & $1,012,482$ \\
 & GT & \model{} & $0.0003$ & $0.2$ & $93.17$ & $1,030,482$ \\
\midrule
Tolokers & GATv2 & -- & $0.001$ & $0$ & $85.05$ & $991,490$ \\
 & GATv2 & \model{} & $0.001$ & $0$ & $84.97$ & $1,012,050$ \\
 & GT & -- & $0.001$ & $0.2$ & $85.35$ & $1,013,154$ \\
 & GT & \model{} & $0.0003$ & $0.2$ & $84.60$ & $1,031,154$ \\
\midrule
Questions & GATv2 & -- & $0.001$ & $0.5$ & $64.72$ & $1,065,986$ \\
 & GATv2 & \model{} & $0.001$ & $0.5$ & $64.87$ & $1,076,306$ \\
 & GT & -- & $0.001$ & $0.2$ & $63.26$ & $1,078,338$ \\
 & GT & \model{} & $0.001$ & $0.2$ & $64.79$ & $1,096,338$ \\
\bottomrule
\end{tabular}}
\end{table}



\end{document}